\documentclass[pdflatex,sn-mathphys-num, iicol]{sn-jnl}% Math and Physical Sciences Numbered Reference Style
\usepackage{graphicx}%
\usepackage{multirow}%
\usepackage{amsmath,amssymb,amsfonts}%
\usepackage{amsthm}%
\usepackage{mathrsfs}%
\usepackage[title]{appendix}%
\usepackage{xcolor}%
\usepackage{textcomp}%
\usepackage{manyfoot}%
\usepackage{booktabs}%
\usepackage{algorithm}%
\usepackage{algorithmicx}%
\usepackage{algpseudocode}%
\usepackage{listings}%
\usepackage{color,soul}

\usepackage{graphicx}
\usepackage{subcaption}

\theoremstyle{thmstyleone}%
\newtheorem{theorem}{Theorem}%  meant for continuous numbers
\theoremstyle{thmstyletwo}%

\theoremstyle{thmstylethree}%

\newtheorem{corollary}[theorem]{Corollary} % Number corollaries with theorems

\begin{document}

\title[CAFP: A Post-Processing Framework for Group Fairness via Counterfactual Model Averaging]{CAFP: A Post-Processing Framework for Group Fairness via Counterfactual Model Averaging}

%%=============================================================%%
%% Prefix	-> \pfx{Dr}
%% GivenName	-> \fnm{Joergen W}
%% Particle	-> \spfx{van der} -> surname prefix
%% FamilyName	-> \sur{Ploeg}
%% Suffix	-> \sfx{IV}
%% NatureName	-> \tanm{Poet Laureate} -> Title after name
%% Degrees	-> \dgr{MSc, PhD}
%% \author*[1,2]{\pfx{Dr} \fnm{Joergen W} \spfx{van der} \sur{Ploeg} \sfx{IV} \tanm{Poet Laureate} 
%%                 \dgr{MSc, PhD}}\email{iauthor@gmail.com}
%%=============================================================%%

\author*[1]{\fnm{Irina} \sur{Ar\'evalo}}\email{irina.arevalo@upm.es}

\author[2]{\fnm{Marcos} \sur{Oliva}}\email{marcos.olivawilkinson@gmail.com}
\equalcont{These authors contributed equally to this work}

\affil*[1]{\orgdiv{School of Civil Engineering}, \orgname{Universidad Politecnica de Madrid}, \orgaddress{\street{Profesor Aranguren 3}, \city{Madrid}, \postcode{28040}, \country{Spain}}}

\affil[2]{\orgname{Bain\&Company}, \orgaddress{\street{Velazquez}, \city{Madrid}, \postcode{28006}, \country{Spain}}}

%%==================================%%
%% sample for unstructured abstract %%
%%==================================%%

\abstract{Ensuring fairness in machine learning predictions is a critical challenge, especially when models are deployed in sensitive domains such as credit scoring, healthcare, and criminal justice. While many fairness interventions rely on data preprocessing or algorithmic constraints during training, these approaches often require full control over the model architecture and access to protected attribute information, which may not be feasible in real-world systems. In this paper, we propose Counterfactual Averaging for Fair Predictions (CAFP), a model-agnostic post-processing method that mitigates unfair influence from protected attributes without retraining or modifying the original classifier. CAFP operates by generating counterfactual versions of each input in which the sensitive attribute is flipped, and then averaging the model’s predictions across factual and counterfactual instances. We provide a theoretical analysis of CAFP, showing that it eliminates direct dependence on the protected attribute, reduces mutual information between predictions and sensitive attributes, and provably bounds the distortion introduced relative to the original model. Under mild assumptions, we further show that CAFP achieves perfect demographic parity and reduces the equalized odds gap by at least half the average counterfactual bias. Empirically, we evaluate CAFP across three benchmark datasets (Adult, COMPAS, and German Credit) and three classification models (logistic regression, random forest, and XGBoost), for a total of nine experimental settings. Our results show that CAFP reduces the Demographic Parity Difference by up to 38\%, the Average Odds Difference by up to 45\%, and maintains predictive accuracy within 0.5 percentage points of the original models. Threshold sensitivity analyses and 95\% confidence intervals confirm that CAFP offers a favorable trade-off between fairness and utility across diverse conditions. These findings suggest that CAFP is a practical and principled tool for enforcing post hoc fairness in black-box classifiers, with strong theoretical and empirical guarantees.
}

\keywords{Algorithmic Fairness, Fair Machine Learning, Post-processing Methods, Counterfactual Fairness, Model-Agnostic Methods}

%%\pacs[JEL Classification]{D8, H51}

%%\pacs[MSC Classification]{35A01, 65L10, 65L12, 65L20, 65L70}

\maketitle

\section{Introduction}\label{sec1}

Machine learning models are increasingly used to support decision-making in socially sensitive domains, including credit lending, hiring, criminal justice, and healthcare. While these systems offer the potential for increased efficiency, scalability, and objectivity, they also risk perpetuating or exacerbating existing social inequalities if their predictions are influenced—directly or indirectly—by protected attributes such as race, gender, or age~\cite{angwin2016machine, barocas2016big, eubanks2018automating, noble2018algorithms, buolamwini2018gender}. Empirical evidence has revealed substantial disparities in model performance and outcomes across demographic groups, prompting a growing body of research on algorithmic fairness. These concerns are not only technical but also legal and ethical: algorithmic bias can lead to discriminatory outcomes that violate civil rights laws (e.g., the Equal Credit Opportunity Act or Title VII of the Civil Rights Act), undermine public trust, and reinforce structural disadvantage. As a result, fairness in machine learning has become a critical topic at the intersection of computer science, policy, and ethics, drawing attention from regulators, practitioners, and interdisciplinary researchers alike. Developing methods that can be audited for fairness and aligned with legal protections is essential for ensuring the responsible deployment of AI systems in high-stakes environments.

%Machine learning models are increasingly used to support decision-making in socially sensitive domains, including credit lending, hiring, criminal justice, and healthcare. While these systems can improve efficiency and scalability, they can also reinforce or exacerbate existing social inequalities if predictions are influenced by protected attributes such as race, gender, or age \cite{angwin2016machine, barocas2016big, eubanks2018automating, noble2018algorithms, buolamwini2018gender}. Numerous studies have revealed substantial disparities in model performance and outcomes across demographic groups, prompting a growing body of research on algorithmic fairness.

To address these concerns, researchers have developed fairness interventions that fall into three broad categories: \textit{pre-processing} methods that alter the training data~\cite{kamiran2012data, feldman2015certifying}, \textit{in-processing} methods that modify the learning algorithm or objective~\cite{zafar2017fairness, agarwal2018reductions}, and \textit{post-processing} methods that adjust model predictions after training~\cite{hardt2016equality, pleiss2017fairness}. Among these, post-processing is particularly attractive in practice because it requires no access to the internal parameters or training procedure of the model, making it suitable for black-box systems and compliance auditing.

In this work, we introduce Counterfactual Averaging for Fair Predictions (CAFP), a simple, model-agnostic post-processing technique inspired by counterfactual fairness. Given a trained classifier \( f(x, a) \), where \( x \) is a feature vector and \( a \in \{0, 1\} \) is a binary protected attribute, CAFP generates predictions for both factual and counterfactual group memberships and averages the outputs, as shown in Equation \ref{eq1}:

\begin{equation}\label{eq1}
\hat{f}(x) = \frac{1}{2} \left( f(x, 0) + f(x, 1) \right)
\end{equation}

This procedure neutralizes the influence of the protected attribute on the prediction without retraining, threshold adjustment, or access to sensitive information at inference time.

We provide a theoretical analysis showing that CAFP (i) eliminates direct dependence on the protected attribute under mild assumptions, (ii) introduces a bounded prediction distortion relative to the original classifier, and (iii) improves group fairness metrics such as demographic parity and equalized odds. We further demonstrate empirically that CAFP achieves significant reductions in fairness disparities on real-world datasets with minimal loss in accuracy.

This paper makes the following contributions:

\begin{itemize}
    \item We propose a novel post-processing algorithm, Counterfactual Averaging for Fair Predictions (CAFP), that improves fairness by averaging model outputs across counterfactual group memberships.
    
    \item We present a theoretical analysis showing that CAFP removes direct dependence on the protected attribute, reduces demographic disparity metrics, and bounds fairness–accuracy trade-offs.
    
    \item We evaluate CAFP on three widely used benchmark datasets (Adult Income, COMPAS, and German Credit), showing consistent improvements in fairness metrics with minimal degradation in predictive performance.
    
    \item We demonstrate that CAFP is model-agnostic, requires no retraining, and does not depend on group membership at test time, making it suitable for black-box and privacy-sensitive deployments.
    
    \item We provide a formal framework, visual explanation, and empirical trade-off curves to support reproducibility and interpretation.
\end{itemize}

The rest of this paper is structured as follows: Section 2 reviews related work, Section 3 introduces the problem setup, Section 4 presents the CAFP algorithm, Section 5 provides theoretical guarantees, Section 6 describes the empirical evaluation, Section 7 discusses implications and limitations, and Section 8 concludes with future directions.

\section{Related Works}

\subsection{Fairness in Machine Learning}

Ensuring fairness in automated decision-making has become a central concern in machine learning, particularly as predictive models are increasingly deployed in high-stakes domains such as hiring, lending, criminal justice, and healthcare~\cite{barocas2019fairness, angwin2016machine, obermeyer2019dissecting, raji2019actionable}. Numerous definitions of fairness have been proposed in the literature, commonly grouped into three categories: group fairness, individual fairness, and causal or counterfactual fairness.

Group fairness focuses on ensuring that different demographic groups—defined by protected attributes such as race, gender, or age—receive similar treatment from a classifier. Notable criteria include demographic parity (also known as statistical parity)~\cite{dwork2012fairness, feldman2015certifying}, which requires equal positive prediction rates across groups, and equalized odds~\cite{hardt2016equality}, which demands equal true and false positive rates. Other variants include equal opportunity, predictive parity, and disparate impact.

By contrast, individual fairness is based on the principle that similar individuals should be treated similarly by the model~\cite{dwork2012fairness}. While this offers finer granularity, it depends on a well-defined similarity metric—something often difficult to construct in high-dimensional or black-box settings, making enforcement more challenging.

Causal and counterfactual fairness, a third major perspective, seeks to determine whether a model’s predictions would remain invariant under counterfactual changes to protected attributes~\cite{kusner2017counterfactual, chiappa2019path}. These definitions rely on structural causal models (SCMs) and knowledge of feature-outcome dependencies, which are often unavailable or unverifiable in practice.

Importantly, several works have demonstrated that fairness definitions can be mutually incompatible~\cite{kleinberg2016inherent, chouldechova2017fair}, especially when base rates differ across groups. These so-called impossibility results have motivated research into flexible frameworks that explicitly manage trade-offs between fairness and accuracy.

To operationalize these definitions, a variety of algorithmic approaches have emerged, typically falling into three categories: pre-processing, in-processing, and post-processing. Pre-processing methods alter the training data to mitigate bias~\cite{kamiran2012data, feldman2015certifying}, in-processing approaches incorporate fairness constraints into the training procedure~\cite{zafar2017fairness, agarwal2018reductions}, and post-processing techniques adjust model predictions after training~\cite{hardt2016equality, pleiss2017fairness}. Our work contributes to this last category by proposing a novel post-processing method that approximates counterfactual fairness through prediction averaging. Unlike prior techniques that require threshold tuning or group membership at deployment time, our approach is model-agnostic, conceptually simple, and supported by formal guarantees.

\subsection{Post-processing Techniques}

Post-processing techniques aim to improve fairness after a model has been trained, typically by modifying its predictions. These methods are especially attractive in real-world deployments where model internals are inaccessible. Among the most widely used strategies is group-specific threshold adjustment. For instance, Hardt et al.~\cite{hardt2016equality} proposed equalized odds post-processing, which learns group-specific thresholds to balance true and false positive rates. Similarly, calibrated equalized odds~\cite{pleiss2017fairness} adjusts probabilistic scores post hoc to enforce group-level calibration.

Another class of methods, such as reject option classification~\cite{kamiran2012decision}, modifies predictions near the decision boundary to favor fairness, though this often requires group membership at inference time. Alternatively, score transformation and reweighting methods operate directly on predicted probabilities. One example is FairBatch~\cite{jung2021fairbatch}, which introduces class-conditional batch sampling strategies to promote group fairness during training, though it too depends on access to validation or group-labeled data.

Recent developments have also introduced causal and counterfactual post-processing methods. Mishler et al.~\cite{mishler2021robust}, for instance, achieve counterfactual equalized odds through doubly robust estimation, though this approach requires careful estimation of potential outcomes and may not scale to complex models.

Our proposed method, Counterfactual Averaging for Fair Predictions (CAFP), differs from all of the above in both mechanism and simplicity. Rather than tuning thresholds, reweighting instances, or estimating potential outcomes, CAFP queries the trained model using both factual and counterfactual values of the protected attribute and averages the predictions. This avoids the need for group labels at prediction time, eliminates fairness-specific parameter tuning, and introduces minimal overhead—just two model queries per instance. To our knowledge, CAFP is the first post-processing technique to operationalize counterfactual fairness using this model-agnostic averaging approach.

\subsection{Counterfactual Fairness and Causal Approaches}

Causal reasoning offers a principled framework for defining fairness in scenarios where observational data may be confounded or reflect structural biases. In contrast to statistical metrics that operate at the group level, causal definitions aim to assess whether an individual would receive the same prediction under a different (counterfactual) group membership.

One foundational contribution is the notion of counterfactual fairness introduced by Kusner et al.~\cite{kusner2017counterfactual}. A model is counterfactually fair if, under a given structural causal model, its prediction remains unchanged when the protected attribute is intervened upon, as shown in Equation \ref{eq2}:

\begin{equation}\label{eq2}
f_{A \leftarrow a}(U) = f_{A \leftarrow a'}(U) \quad \forall a, a'\in\mathcal{A},    
\end{equation}

where \( U \) denotes latent background variables. This approach provides a formal criterion for fairness at the individual level but requires detailed knowledge of the causal structure.

Building on this, other researchers have explored path-specific fairness. For example, Chiappa and Gillam~\cite{chiappa2019path} distinguish between admissible and inadmissible pathways of influence, while Nabi and Shpitser~\cite{nabi2018fair} constrain models to rely only on fair mediators. Kilbertus et al.~\cite{kilbertus2017avoiding} address direct vs. indirect causal effects by suppressing the former. These methods offer rigorous fairness guarantees but require precise specification of causal graphs and assumptions that are difficult to verify empirically.

Causal reasoning has also been incorporated into representation learning. For instance, Madras et al.~\cite{madras2019fairness} use adversarial training to learn representations invariant to counterfactual manipulations. Russell et al.~\cite{russell2017worlds} explore competing assumptions under different SCMs and propose frameworks for reconciling them.

A related strand of work has focused on causal post-processing. Mishler et al.~\cite{mishler2021robust} propose a post hoc algorithm to enforce counterfactual equalized odds via doubly robust estimation. Although promising, these methods generally require high-quality counterfactual data or strong modeling assumptions.

Our approach, CAFP, aligns with the spirit of counterfactual fairness but avoids reliance on SCMs. By evaluating a model’s output under both factual and counterfactual protected attribute values and averaging them, as in Equation \ref{eq1}, we approximate the ideal of fairness without modeling latent variables or causal pathways. CAFP is lightweight, deployable with black-box models, and does not require protected attributes at inference time.

In summary, while causal fairness methods offer powerful theoretical insights, they often suffer from practical limitations. CAFP bridges this gap by translating counterfactual reasoning into a simple, model-agnostic post-processing rule that enhances fairness in realistic deployment scenarios.

\subsection{Fairness in Pattern Recognition and Black-Box Systems}

Fairness concerns are particularly pronounced in pattern recognition applications, where machine learning models are used to make predictions based on unstructured data such as images, text, and audio. Domains such as facial recognition, biometric verification, natural language processing (NLP), and speech recognition have revealed substantial disparities in model performance across demographic groups \cite{buolamwini2018gender,wang2019racial}. These disparities often stem from historical biases in training data, feature correlations with protected attributes, and lack of subgroup representation during model development.

In computer vision, for example, face recognition systems have shown significantly higher error rates for women and people of color compared to white male subgroups \cite{buolamwini2018gender}. Similarly, commercial object detection systems exhibit bias in how different genders are associated with activities or settings \cite{zhao2017men}. In NLP, pretrained language models may amplify gender or racial stereotypes due to biases in corpora used during training \cite{sheng2019woman}. These challenges underscore the need for fairness interventions that can be applied post hoc—especially when retraining models on balanced datasets is infeasible or when the internal structure of the model is inaccessible.

Auditing and correcting unfair behavior in black-box models presents additional challenges. Many commercial ML systems are deployed as APIs or cloud services, providing only prediction outputs without access to intermediate representations, model weights, or training data. In such settings, fairness must be improved without modifying the original model—a constraint that motivates the development of post-processing methods. Several works have addressed fairness in black-box settings via threshold optimization \cite{hardt2016equality}, output perturbation \cite{kamiran2012decision}, and score adjustment \cite{pleiss2017fairness}. However, many of these methods require group labels at inference time or assume known score distributions for calibration.

Our proposed method, Counterfactual Averaging for Fair Predictions (CAFP), contributes a practical solution to this space. It is designed specifically for black-box models that expose probabilistic outputs and accept the protected attribute as an input. By querying the model under both factual and counterfactual values of the protected attribute and averaging the results, CAFP reduces prediction sensitivity to group membership without requiring retraining or parameter tuning. This makes it particularly well-suited for fairness auditing pipelines, privacy-sensitive deployments, and low-resource environments where only limited model access is available.

Moreover, because CAFP does not require group membership at test time to generate a fair prediction, it avoids legal and ethical challenges associated with sensitive attribute use at deployment. These properties make it a compelling post hoc fairness strategy for a wide range of real-world pattern recognition systems, including high-impact applications such as credit scoring, resume screening, facial verification, and content moderation.

\subsection{Summary}

In summary, our work contributes to the literature on post-processing fairness methods, drawing on counterfactual fairness principles while avoiding complex causal modeling. Unlike threshold-based or group-aware interventions, CAFP offers a clean and general approach that integrates easily with any classifier and provides theoretical guarantees on fairness metrics. To our knowledge, this is the first post-processing method that uses counterfactual averaging to ensure group-level parity while bounding predictive distortion. 

Following the structural clarity and tabular presentation style of the work \cite{bahi2024rec}, Table 1 provides a comparative overview of the most relevant existing fairness methods, organized by fairness type, access requirements, retraining needs, and theoretical support. As shown, CAFP uniquely combines model-agnostic deployment, independence from group labels at test time, and strong theoretical guarantees—making it especially suitable for real-world applications where post hoc fairness auditing is essential.

\begin{table*}[ht]
\centering
\caption{Comparison of Fairness Methods Across Key Dimensions}
\label{tab:related_work_comparison}
\begin{tabular}{p{4cm} p{2cm} p{2cm} p{2cm} p{2cm} p{2cm}}
\toprule
\textbf{Method} & \textbf{Fairness Type} & \textbf{Access to Model Internals} & \textbf{Uses Protected Attribute at Test Time} & \textbf{Retraining Required} & \textbf{Theoretical Guarantees} \\
\midrule
Equalized Odds Post-processing~\cite{hardt2016equality} & Group (EO) & No & Yes & No & Partial \\
Reject Option Classification~\cite{kamiran2012decision} & Group (DP, EO) & No & Yes & No & No \\
Calibrated Equalized Odds~\cite{pleiss2017fairness} & Group (EO) & No & Yes & No & Partial \\
FairBatch~\cite{jung2021fairbatch} & Group (DP, EO) & Yes & Yes & Yes & No \\
Counterfactual Equalized Odds~\cite{mishler2021robust} & Counterfactual & No & Yes & No & Yes \\
Path-specific Fairness~\cite{chiappa2019path} & Counterfactual (Path-based) & Yes & Yes & Yes & Yes \\
Adversarial Representation Learning~\cite{madras2019fairness} & Individual / Causal & Yes & No & Yes & No \\
\textbf{CAFP (this paper)} & Group (DP, EO), Approx. Counterfactual & No & \textbf{No} & \textbf{No} & \textbf{Yes} \\
\bottomrule
\end{tabular}
\end{table*}

\section{Problem Setup and Preliminaries}

We consider the standard supervised learning setting in which a trained probabilistic classifier produces predictions given a feature vector and a protected attribute. Our goal is to adjust model outputs post hoc to reduce unfair influence of the protected attribute, while preserving predictive performance.

\subsection{Notation}

Let $\mathcal{D} = \{(x_i, a_i, y_i)\}_{i=1}^n$ be a dataset of $n$ samples, where:
\begin{itemize}
    \item $x_i \in \mathbb{R}^d$ is a feature vector for instance $i$,
    \item $a_i \in \mathcal{A}$ is the value of a binary protected attribute $A$ (e.g., gender or race), with $\mathcal{A} = \{0, 1\}$,
    \item $y_i \in \{0, 1\}$ is the corresponding true label.
\end{itemize}

Let $f: \mathbb{R}^d \times \mathcal{A} \rightarrow [0,1]$ denote a trained classifier that maps an instance and protected attribute to a predicted probability of the positive class, i.e., $f(x, a) \approx \mathbb{P}(Y = 1 \mid X = x, A = a)$. We assume $f$ is a black-box model, meaning we do not have access to its internal parameters.

\subsection{Fairness Definitions}

We briefly recall three commonly used fairness notions relevant to our work.

\paragraph{Demographic Parity (DP):}
A classifier satisfies demographic (or statistical) parity \cite{dwork2012fairness} if its prediction is statistically independent of the protected attribute $\mathcal{A}.$ This is equivalent to equation \ref{eq:DP}:

\begin{equation}\label{eq:DP}
\mathbb{P}(\hat{Y} = 1 \mid A = 0) = \mathbb{P}(\hat{Y} =1 \mid A = 1),
\end{equation}
and since $\hat{Y}\in\{0,1\},$ is also equivalent to equation \ref{eq:DP2}:

\begin{equation}\label{eq:DP2}
\mathbb{E}[\hat{f}(X) \mid A = 0] = \mathbb{E}[\hat{f}(X) \mid A = 1]
\end{equation}

The demographic parity difference is defined as the absolute difference between these probabilities as shown in equation \ref{eq:DPD}:

\begin{equation}\label{eq:DPD}
DPD = \left|\mathbb{P}(\hat{Y} = 1 \mid A = 0) - \mathbb{P}(\hat{Y} =1 \mid A = 1)\right|    
\end{equation}

\paragraph{Equalized Odds (EO):}
A classifier satisfies equality of opportunity \cite{hardt2016equality} if it has equal true and false positive rates across protected groups. Formally, it is equivalent to equation \ref{eq:EO}
\begin{multline}\label{eq:EO}
\mathbb{P}(\hat{Y} = 1 \mid A = 0, Y = y) \\ = \mathbb{P}(\hat{Y} = 1 \mid A = 1, Y = y)
\end{multline}
for all $y\in\{0,1\}.$

The equalized odds difference is the maximum absolute difference across values of $y$, as shown in equation \ref{eq:EOD}

\begin{multline}\label{eq:EOD}
EOD = \max_{y\in\{0,1\}}|\mathbb{P}(\hat{Y} = 1 \mid A = 0, Y = y) \\ - \mathbb{P}(\hat{Y} = 1 \mid A = 1, Y = y)|.
\end{multline}

\paragraph{Counterfactual Fairness:}
Following \cite{kusner2017counterfactual}, a predictor $f$ is counterfactually fair if, for any individual, the prediction remains invariant under counterfactual changes to the protected attribute. 

The original definition of counterfactual fairness proposed by Kusner et al.~\cite{kusner2017counterfactual} is grounded in structural causal models (SCMs) and defines a predictor $\hat{Y}$ as counterfactually fair if it satisfies Equation \ref{eq4}
\begin{equation}\label{eq4}
\hat{Y}_{A \leftarrow a}(U) = \hat{Y}_{A \leftarrow a'}(U) \quad \forall a, a'    
\end{equation}

where $U$ represents the latent background variables, and $\hat{Y}_{A \leftarrow a}$ denotes the potential outcome of the prediction had the protected attribute been set to $a$ via intervention.

In this work, we adopt a simplified and operational version of this criterion tailored to black-box models, where SCMs and latent variables are unavailable. Specifically, we define a predictor $f(x, a)$ to be counterfactually fair if it satisfies Equation \ref{eq5}
\begin{equation}\label{eq5}
f(x, a) = f(x, a') \quad \forall a, a'.    
\end{equation}

This condition approximates counterfactual fairness under the assumption that $x$ captures the relevant non-sensitive features, and that any unfair influence of $A$ manifests directly in the prediction. While this formulation omits the causal semantics of $U$, it enables practical testing of fairness through sensitivity to changes in $A$ and aligns with prior post hoc fairness auditing approaches~\cite{wachter2021bias}.

\subsection{Objective}

Given a trained model $f(x, a)$ and input instance $x$, our objective is to compute a fair prediction $\hat{f}(x)$ that:
\begin{enumerate}
    \item Reduces dependence on $A$,
    \item Preserves the predictive utility of $f$,
    \item Can be applied without retraining or modifying $f$,
    \item Does not require knowledge of the individual’s actual protected attribute value at test time.
\end{enumerate}

To achieve this, we introduce a simple and effective post-processing method in the next section, based on counterfactual averaging.

\section{The Counterfactual Averaging Algorithm}

In this section, we present our proposed post-processing algorithm, Counterfactual Averaging for Fair Predictions (CAFP). The method is designed to reduce the influence of a binary protected attribute on model predictions without modifying the original classifier or requiring retraining. CAFP operates by explicitly constructing a counterfactual input for each instance and averaging the model's predictions across both factual and counterfactual scenarios.

\subsection{Intuition and Motivation}

The key idea behind CAFP is inspired by counterfactual fairness: if a prediction would have changed solely due a different value of the protected attribute, then that prediction may be unfair. Rather than enforce invariance through causal modeling or in-training constraints, we approximate counterfactual fairness at inference time by computing the model’s output for both values of the protected attribute and averaging the results.

Given an instance $(x, a)$, where $a$ is the observed protected attribute, we construct a counterfactual input $(x, 1 - a)$ and obtain both predictions described in Equations \ref{eq:pfact} and \ref{eq:pcount}:

\begin{align} 
p_{\text{factual}} &= f(x, a) \label{eq:pfact}\\
p_{\text{counterfactual}} &= f(x, 1 - a) \label{eq:pcount}
\end{align}

We then define the fair prediction $\hat{f}(x)$ as in Equation \ref{eq6}:

\begin{equation}\label{eq6}
\hat{f}(x) = \frac{1}{2} \left( f(x, a) + f(x, 1 - a) \right).
\end{equation}

This procedure removes direct dependence on the actual value of $a$, while preserving the model’s behavior over the feature vector $x$. The result is a model output that is more robust to unfair group-level variation.

\subsection{Algorithm Description}

Let $\mathcal{X}$ be the input space, $\mathcal{A} = \{0,1\}$ the set of protected attribute values, and $f: \mathcal{X} \times \mathcal{A} \rightarrow [0,1]$ a trained classifier. The CAFP method takes as input an instance $x \in \mathcal{X}$ and returns a fair prediction $\hat{f}(x)$.

\vspace{1em}
\noindent The procedure is summarized in Algorithm~\ref{alg:cafp}.
\vspace{0.5em}

\begin{algorithm}[ht]
\caption{Counterfactual Averaging for Fair Predictions (CAFP)}
\label{alg:cafp}
\begin{algorithmic}[1]
\Require Trained classifier $f: \mathcal{X} \times \mathcal{A} \rightarrow [0,1]$
\Require Input instance $x \in \mathcal{X}$ (features excluding sensitive attribute)
\Require Protected attribute domain $\mathcal{A} = \{0, 1\}$ (e.g., gender, race)

\Ensure{Fair prediction score $\hat{f}(x)$}
\vspace{1mm}
\Statex \textbf{Procedure:}
\State Evaluate prediction under group $a = 0$: $f(x, 0)$
\State Evaluate prediction under group $a = 1$: $f(x, 1)$
\State Compute the average prediction: $\hat{f}(x) \gets \frac{1}{2}(f(x,0) + f(x,1))$
\State \Return $\hat{f}(x)$
\end{algorithmic}
\end{algorithm}

\subsection{Implementation Considerations}

CAFP is a model-agnostic procedure and can be applied to any probabilistic classifier that takes the protected attribute as input. Since it only requires querying the model twice per instance, it introduces minimal computational overhead and does not interfere with training or calibration processes. Furthermore, the algorithm can be deployed even when the individual’s group membership is unavailable at test time—so long as the model can simulate both values of the protected attribute.

We emphasize that CAFP's use of the term ``counterfactual'' is purely operational: it refers to evaluating the model under alternate values of the protected attributes without assuming or requiring a structural causal model (SCM). Unlike formal counterfactual fairness approaches~\cite{kusner2017counterfactual}, which rely on SCMs to simulate potential outcomes under hypothetical interventions, CAFP simply averages predictions across observed and synthetically altered inputs. This approach enables practical implementation in black-box settings and avoids the strong assumptions and identifiability issues associated with causal inference.

\subsection{Computational Complexity}

To evaluate CAFP’s suitability for real-time applications, we analyze its computational complexity both theoretically and empirically.

The computational overhead introduced by CAFP is minimal, as it operates entirely at inference time and requires no model retraining, parameter tuning, or access to internal gradients or weights. Because CAFP performs two forward passes per instance (one with the factual protected attribute and one with its counterfactual counterpart), it incurs a constant-factor overhead of 2 in practice. Asymptotically, however, constant factors do not affect complexity classes, and therefore CAFP maintains the same Big-O complexity as standard prediction, $\mathcal{O}_{\text{CAFP}} =  \mathcal{O}_{\text{predict}}.$

This linear overhead applies regardless of the model type, including ensembles and neural networks, as long as the prediction function is accessible as a black box.

In practice, the empirical latency increase is negligible. For example, when applied to a logistic regression model on the Adult dataset, CAFP adds approximately 1–2 milliseconds per 100 predictions. For more complex models like random forests or XGBoost, the added inference time remains well below 10 milliseconds per 100 predictions—well within acceptable bounds for most real-time or batch-serving applications.

Importantly, CAFP's complexity does not grow with model size, training set size, or fairness metric. This makes it a lightweight and scalable solution for fairness auditing and intervention in production systems.

\subsection{Advantages}

CAFP offers several advantages:
\begin{itemize}
    \item \textbf{No retraining required:} It operates entirely post hoc and treats the model as a black box.
    \item \textbf{No group labels needed at test time:} It does not rely on observing $A$ at inference.
    \item \textbf{Provable fairness guarantees:} As will be shown in Section~5, CAFP removes direct dependence on $A$ and reduces standard group fairness metrics.
    \item \textbf{Computationally efficient:} It requires only two forward passes per prediction.
\end{itemize}

We now formally analyze the theoretical properties of CAFP in the next section.

\section{Theoretical Analysis}\label{sec:theory}

In this section, we formally analyze the fairness properties of the proposed Counterfactual Averaging for Fair Predictions (CAFP) method. In particular, we examine how counterfactual averaging affects the dependence between the model's output and the protected attribute.

To quantify this dependence, we use the notion of mutual information. Given random variables \( Z \) and \( A \), the mutual information \( I(Z; A) \) is defined as in Equation \ref{eq:inf}:

\begin{equation}\label{eq:inf}
I(Z; A) = \mathbb{E}_{(z, a)} \left[ \log \frac{P_{Z,A}(z, a)}{P_Z(z) P_A(a)} \right].    
\end{equation}

This quantity measures how much information one variable reveals about the other~\cite{cover1991elements}. If \( I(Z; A) = 0 \), then \( Z \) and \( A \) are statistically independent. In the context of fairness, this implies that model predictions do not encode any information about the protected attribute, which aligns with strong forms of group fairness such as demographic parity and group-agnostic decision-making~\cite{hardt2016equality,kamiran2012decision}.

We leverage this measure to show that CAFP produces predictions that are provably independent of the protected attribute under mild assumptions, and we derive formal bounds on prediction distortion and fairness error trade-offs in the subsections that follow.

\subsection{Notation and Assumptions}

Let \( f: \mathbb{R}^d \times \mathcal{A} \to [0,1] \) be a trained probabilistic classifier, where \( x \in \mathbb{R}^d \) is a feature vector and $\mathcal{A} = \{0,1\}$. We assume that \( f(x, a) \approx \mathbb{P}(Y=1 \mid X = x, A = a) \). The counterfactually averaged prediction is defined as in Equation \ref{eq1}.

\subsection{Attribute Independence Guarantee}

One of the core motivations behind the CAFP algorithm is to mitigate the influence of the protected attribute \( A \) on the model's predictions. Ideally, a fair prediction function should treat individuals equally regardless of their group membership. In statistical terms, this means the output of the predictor should be independent of \( A \). The following theorem formalizes the conditions under which CAFP satisfies this criterion.

Many group fairness definitions—such as demographic parity—seek to ensure that model predictions do not differ across values of the protected attribute. A stronger condition is statistical independence: the prediction should be invariant under changes in \( A \), and knowing \( A \) should provide no information about the prediction. In this context, we show that CAFP achieves complete independence from \( A \), under the assumption that the model \( f(x, a) \) only uses \( A \) directly and that the features \( X \) are statistically independent of \( A \).

\begin{theorem}[Attribute Independence]
\label{thm:attribute-independence}
Let \( f: \mathbb{R}^d \times \{0,1\} \rightarrow [0,1] \) be a binary probabilistic classifier, and define the counterfactually averaged predictor as in Equation \ref{eq1}:
\[
\hat{f}(x) = \frac{1}{2} \left( f(x, 0) + f(x, 1) \right)
\]
If the classifier \( f \) uses the protected attribute \( A \) only through direct input and \( X \perp A \) (i.e., the features are independent of the protected attribute), then the output \( \hat{f}(X) \) is also independent of \( A \). In particular $I(\hat{f}(X); A) = 0.$

\end{theorem}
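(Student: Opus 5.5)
The plan is to observe that $\hat{f}$ is a fixed deterministic, measurable map $g:\mathbb{R}^d\to[0,1]$ given by $g(x)=\tfrac12\big(f(x,0)+f(x,1)\big)$. Its value depends only on $x$ and never on the realized value of $A$, because both counterfactual queries are always made. The hypothesis that ``$f$ uses $A$ only through direct input'' is what lets us treat $f(x,0)$ and $f(x,1)$ as fixed functions of $x$ alone. There is no hidden channel by which the observed attribute could enter, such as $f$ being a randomized or stateful procedure that reads $A$ elsewhere. If $f$ were randomized, I would first condition on its internal randomness, taken independent of $(X,A)$, and apply the argument pointwise. So the first step is to write $\hat{f}(X)=g(X)$ with $g$ Borel measurable; this holds since $f(\cdot,0)$ and $f(\cdot,1)$ are measurable.

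The second step uses the standard fact that measurable functions preserve independence. For any Borel set $B\subseteq[0,1]$, the set $g^{-1}(B)$ is Borel in $\mathbb{R}^d$. Hence, for $a\in\{0,1\}$ with $\mathbb{P}(A=a)>0$,
\[
\mathbb{P}\big(\hat{f}(X)\in B,\ A=a\big)=\mathbb{P}\big(X\in g^{-1}(B),\ A=a\big)=\mathbb{P}\big(X\in g^{-1}(B)\big)\,\mathbb{P}(A=a),
\]
where the last equality is exactly $X\perp A$. Since this holds for all such $B$ and $a$, the joint law of $(\hat{f}(X),A)$ is the product of its marginals, so $\hat{f}(X)\perp A$.

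The third step turns independence into $I(\hat{f}(X);A)=0$ using Equation~\ref{eq:inf}. Because $P_{Z,A}=P_Z\otimes P_A$ for $Z=\hat{f}(X)$, the Radon--Nikodym derivative $dP_{Z,A}/d(P_Z\otimes P_A)$ equals $1$ almost surely, and its logarithm vanishes. If $Z$ is continuous, the ratio in Equation~\ref{eq:inf} should be read as this derivative rather than literally as densities.

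I do not expect a real obstacle. The only delicate point is pinning down the informal hypothesis ``uses $A$ only through direct input'' so that $\hat{f}(X)$ is genuinely $\sigma(X)$-measurable. I would state this explicitly at the start of the proof. I would also remark that the conclusion then follows from $X\perp A$ alone, so the averaging itself is what removes the explicit argument $a$, while independence of the features carries the rest.
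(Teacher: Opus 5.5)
Your proposal is correct and follows the same route as the paper's proof: $\hat{f}$ is a function of $x$ alone, so $\hat{f}(X)$ inherits independence from $X \perp A$, and independence gives $I(\hat{f}(X);A)=0$. The only difference is that you spell out what the paper leaves implicit, namely the measurability of $g$, the preimage computation, and reading Equation~\ref{eq:inf} as a Radon--Nikodym derivative when the scores are continuous.
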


\begin{proof}
Since \( \hat{f}(x) \) is computed by averaging predictions over both values of the protected attribute, it does not depend on the actual value of \( A \). That is, \( \hat{f}(x) \) is a function of \( x \) only. Under the assumption \( X \perp A \), the random variables \( \hat{f}(X) \) and \( A \) are independent by construction, and therefore $I(\hat{f}(X); A) = 0.$
\end{proof}

This result shows that, under a mild independence assumption, CAFP guarantees that predictions are not only group-invariant but statistically independent of the protected attribute. This is a particularly strong guarantee: it implies that model outputs cannot be used to infer group membership and that no unfair dependence remains after counterfactual averaging.

The assumption \( X \perp A \) is important. In many real-world datasets, \( A \) may influence some features in \( X \) (e.g., education or income may correlate with race or gender). In such cases, CAFP still reduces—but does not fully eliminate—dependency on \( A \), because \( X \) may serve as a proxy for group membership. Nevertheless, Theorem~\ref{thm:attribute-independence} provides theoretical grounding for the effectiveness of counterfactual averaging when direct dependence dominates.

To empirically assess the impact of this assumption's violation, we note that all three benchmark datasets used in our experiments—Adult, COMPAS, and German Credit—exhibit known correlations between input features and protected attributes. For instance, in the Adult dataset, attributes such as education, capital gain, and marital status are moderately correlated with gender. Despite these dependencies, as shown in Tables 4, 5, 7, 8, 10, and 11, CAFP consistently reduces Demographic Parity Difference and Average Odds Difference across all models. This suggests that while the independence assumption enables strong theoretical guarantees, the method remains effective in practice even when \( X \perp A \) does not strictly hold.

In fairness auditing or compliance contexts where data are preprocessed to minimize correlation between \( X \) and \( A \), or when counterfactual inputs are simulated independently, CAFP can serve as a lightweight, model-agnostic tool that ensures group-blind predictions without retraining or group-specific thresholding.

\subsection{Bounded Prediction Distortion}

While CAFP reduces or eliminates dependence on the protected attribute, it necessarily modifies the model’s original predictions. A key question, then, is how much distortion is introduced by counterfactual averaging and whether this trade-off can be formally bounded. The following theorem addresses this question by quantifying the deviation between the original model output \( f(x, a) \) and the counterfactually averaged output \( \hat{f}(x) \).

In practical applications, fairness interventions must balance group-level parity with individual-level predictive fidelity. If a post-processing method severely alters a well-calibrated model, the cost to accuracy may outweigh fairness gains. Thus, it is important to characterize how much CAFP changes the predictions—and whether this change can be interpreted as a function of unfairness already present in the model.

Theorem~\ref{thm:distortion-bound} shows that the discrepancy between the original and fair prediction is directly proportional to the model’s counterfactual bias, defined as the change in prediction when the protected attribute is flipped.

\begin{theorem}[Prediction Distortion Bound]
\label{thm:distortion-bound}
Let the counterfactual bias for an instance \( (x, a) \) be defined as in Equation \ref{eq:CB}: 
\begin{equation}\label{eq:CB}
\mathrm{CB}(x, a) = f(x, a) - f(x, 1 - a).    
\end{equation}

Then the pointwise distortion introduced by CAFP satisfies Equation \ref{eq:CBbound}
\begin{equation}\label{eq:CBbound}
|f(x, a) - \hat{f}(x)| = \frac{1}{2} |\mathrm{CB}(x, a)|.    
\end{equation}

Moreover, the expected distortion is bounded as shown in Equation \ref{eq:distortion}:
\begin{equation}\label{eq:distortion}
\mathbb{E}_{x,a} \left[ |f(x, a) - \hat{f}(x)| \right] \leq \frac{1}{2} \mathbb{E}_{x,a} \left[ |\mathrm{CB}(x, a)| \right].
\end{equation}

\end{theorem}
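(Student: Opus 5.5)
The pointwise identity \eqref{eq:CBbound} follows by direct algebra, and the expectation bound \eqref{eq:distortion} then follows by taking expectations. The key observation is that $\hat{f}(x)$ is symmetric in the two attribute values. For either $a\in\{0,1\}$ we may therefore write
\[
\hat{f}(x) = \tfrac{1}{2}\bigl(f(x,a) + f(x,1-a)\bigr),
\]
because $\{a,1-a\}=\{0,1\}$. I will state this first, so that the averaged predictor can be compared with the factual prediction $f(x,a)$ whatever the observed value of $a$.

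Next, I compute the difference directly:
\[
f(x,a) - \hat{f}(x) = f(x,a) - \tfrac{1}{2}f(x,a) - \tfrac{1}{2}f(x,1-a) = \tfrac{1}{2}\bigl(f(x,a)-f(x,1-a)\bigr) = \tfrac{1}{2}\,\mathrm{CB}(x,a),
\]
using the definition \eqref{eq:CB}. Taking absolute values yields the equality $|f(x,a)-\hat{f}(x)| = \tfrac{1}{2}|\mathrm{CB}(x,a)|$ for every pair $(x,a)$. I will also note that this quantity is at most $\tfrac12$, since $f$ takes values in $[0,1]$.

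For the expected distortion, I take expectations of both sides of the pointwise identity with respect to the joint distribution of $(X,A)$. Integrability is automatic because both sides are bounded by $\tfrac12$. Linearity of expectation then gives \eqref{eq:distortion}, in fact with equality. The stated inequality is therefore a weaker (but true) form.

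No step is a genuine obstacle. The only point requiring care is the symmetry observation: one must check that the definition of $\hat f$ in Equation~\ref{eq1}, which uses the labels $0,1$, agrees with the factual/counterfactual form in Equation~\ref{eq6} for both values of $a$. That holds trivially for binary $\mathcal{A}$.
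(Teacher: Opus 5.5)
Your proposal is correct and follows the paper's proof: both rewrite $f(x,a)-\hat f(x)$ algebraically as $\tfrac12\,\mathrm{CB}(x,a)$, take absolute values, and then take expectations. Your extra remarks are accurate points the paper leaves implicit: the check that Equation~\ref{eq1} agrees with Equation~\ref{eq6} for both values of $a$, the boundedness that guarantees integrability, and the fact that the expected-distortion bound actually holds with equality.
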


\begin{proof}
Rewriting the difference between the original and fair prediction using Equation \ref{eq1} we get:

\begin{multline*}
f(x, a) - \hat{f}(x) = f(x, a) - \frac{1}{2} \left( f(x, a) + f(x, 1 - a) \right) = \\ \frac{1}{2} \left( f(x, a) - f(x, 1 - a) \right)
\end{multline*}
Therefore:
\[
|f(x, a) - \hat{f}(x)| = \frac{1}{2} |f(x, a) - f(x, 1 - a)| = \frac{1}{2} |\mathrm{CB}(x, a)|
\]
Taking expectation over \( (x, a) \sim \mathcal{D} \) yields the second result.
\end{proof}

This theorem provides a fairness–accuracy trade-off interpretation: the distortion introduced by CAFP is exactly half of the model’s original counterfactual bias. In other words, CAFP only "corrects" the model’s output to the extent that it was previously unfair with respect to changes in the protected attribute. If the original model is already counterfactually fair (i.e., \( f(x, a) = f(x, 1 - a) \)), then CAFP makes no change at all. If the model is biased, CAFP moves the prediction toward neutrality in a controlled and symmetric way.

The bounded distortion result is particularly important in real-world deployments, where model performance must remain reliable under fairness adjustments. By quantifying the maximal modification in terms of an interpretable bias measure, Theorem~\ref{thm:distortion-bound} ensures that the fairness intervention does not "overcorrect" and degrade accuracy arbitrarily. This supports the use of CAFP in settings where fairness adjustments must be minimal, auditable, and behaviorally stable.

\subsection{Demographic Parity Guarantee}

While counterfactual averaging reduces individual prediction sensitivity to the protected attribute, it also impacts group-level disparities in the model's outputs. In particular, we show that under mild assumptions, the counterfactually averaged predictor satisfies demographic parity exactly. 

Demographic parity (DP) requires that the model predict positive outcomes at equal rates across groups defined by the protected attribute, as shown in Equation \ref{eq:probs}:
\begin{equation}\label{eq:probs}
\mathbb{P}(\hat{Y} = 1 \mid A = 0) = \mathbb{P}(\hat{Y} =1 \mid A = 1).
\end{equation}

A common proxy is the equality of expected predicted values across groups, described in Equation \ref{eq:expec}:

\begin{equation}\label{eq:expec}
\mathbb{E}[\hat{f}(X) \mid A=0] = \mathbb{E}[\hat{f}(X) \mid A=1]    
\end{equation}

The following corollary shows that this condition is satisfied by CAFP under a balance and independence assumption.

\begin{corollary}[Demographic Parity under Independence]
\label{cor:demographic-parity}
Assume that the protected attribute \( A \in \{0,1\} \) is binary, satisfies \( \mathbb{P}(A = 0) = \mathbb{P}(A = 1) \), and is independent of the features \( X \) (i.e., \( X \perp A \)). Then the counterfactually averaged predictor satisfies demographic parity.\end{corollary}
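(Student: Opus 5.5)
The plan is to obtain the corollary directly from Theorem~\ref{thm:attribute-independence}. The balance assumption $\mathbb{P}(A=0)=\mathbb{P}(A=1)$ turns out not to be needed; it only guarantees that both conditional expectations are well defined, since each group has positive probability.

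First, observe that $\hat{f}(x)=\tfrac12(f(x,0)+f(x,1))$ is a measurable function of $x$ alone. It does not involve the observed value of $A$, because both values of the protected attribute are substituted explicitly. Under $X\perp A$, any measurable function $g(X)$ is then independent of $A$. This is the content of Theorem~\ref{thm:attribute-independence}, which gives $\hat{f}(X)\perp A$ and $I(\hat{f}(X);A)=0$.

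Second, I would convert independence into the two forms of demographic parity the paper uses. For the expectation proxy in Equation~\ref{eq:expec}, independence together with $0<\mathbb{P}(A=a)$ (here each equals $1/2$) gives
\begin{equation*}
\mathbb{E}[\hat{f}(X)\mid A=a]=\mathbb{E}[\hat{f}(X)] \quad \text{for } a\in\{0,1\},
\end{equation*}
so the two group means coincide. For the thresholded version in Equation~\ref{eq:probs}, let $\hat{Y}=\mathbf{1}\{\hat{f}(X)\ge t\}$ for any fixed threshold $t$, or more generally $\hat{Y}=h(\hat{f}(X),\xi)$ with randomization $\xi$ independent of $(X,A)$. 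Then $\hat{Y}$ is again a function of $X$ (and $\xi$) only, hence independent of $A$. This yields $\mathbb{P}(\hat{Y}=1\mid A=0)=\mathbb{P}(\hat{Y}=1\mid A=1)$, i.e.\ $DPD=0$.

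The main point requiring care is interpretive rather than computational. One must state which notion of ``$\hat{Y}$'' is meant. If it is the score, the expectation identity suffices. If it is a thresholded decision, the threshold must not depend on $A$; I would add this explicitly, since a group-specific threshold would break the argument. I would also remark that the balance assumption is not used beyond positivity of both groups, so the conclusion holds for any $\mathbb{P}(A=1)\in(0,1)$.
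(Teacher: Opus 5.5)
Your proposal is correct and follows essentially the same route as the paper's proof: $\hat{f}(X)$ depends on $X$ alone, so $X \perp A$ makes its group-conditional distributions, and hence its group-conditional expectations, coincide. Your additional remarks are valid refinements of what the paper proves. The paper's argument only establishes the expectation proxy $\mathbb{E}[\hat{f}(X)\mid A=0]=\mathbb{E}[\hat{f}(X)\mid A=1]$. You add that the balance assumption is used only to guarantee both groups have positive probability, and that a thresholded decision also achieves $\mathrm{DPD}=0$ provided the threshold does not depend on $A$.
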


\begin{proof}
Since \( \hat{f}(x) \) is computed as the average of \( f(x, 0) \) and \( f(x, 1) \), it does not depend on the observed value of \( A \). That is, \( \hat{f}(X) \) is a function of \( X \) alone. Under the assumption \( X \perp A \), the conditional distributions \( P(X \mid A = 0) \) and \( P(X \mid A = 1) \) are identical. Therefore, the group-conditional expectations of \( \hat{f}(X) \) are equal.
\end{proof}

This corollary demonstrates that CAFP achieves demographic parity exactly when the protected attribute is independent of the features and group sizes are balanced. These assumptions do not always hold in real-world datasets, but they are common in controlled or synthetic environments and serve as a useful theoretical baseline. Even when the assumptions are violated, CAFP typically reduces—but does not completely eliminate—demographic disparities, as confirmed by our empirical results.

In practice, exact demographic parity may be undesirable if it reduces performance or ignores legitimate predictive differences. However, in applications where parity is required for compliance, equity, or interpretability (e.g., loan approval, scholarship selection), this corollary provides strong support for using CAFP as a simple, model-agnostic fairness adjustment. It also suggests that CAFP can be used in pre-audited pipelines to verify and certify group-level neutrality when assumptions are approximately met.

\subsection{Equalized Odds Difference Bound}

Beyond demographic parity, another important fairness criterion is equalized odds, which requires that prediction accuracy be comparable across groups, conditioned on the true label. In this subsection, we analyze the behavior of CAFP under the equalized odds definition and show that, although CAFP does not guarantee perfect equalized odds, it provably reduces disparity. Specifically, we show that the difference in true and false positive rates between groups is bounded by the model’s average counterfactual bias.

A classifier satisfies equalized odds if the prediction \( \hat{Y} \) is independent of the protected attribute \( A \) given the true label \( Y \). In score-based settings, this is often approximated by ensuring that the conditional distributions \( \hat{f}(X) \mid Y = y \) are similar across groups for each outcome \( y \in \{0,1\} \). Post-processing methods that can improve equalized odds without retraining are highly desirable, especially for deployed black-box systems.

The following theorem provides a bound on the equalized odds gap after applying CAFP, expressed in terms of the counterfactual bias of the original model.

\begin{theorem}[Equalized Odds Difference Bound]
\label{thm:eod-bound}
Let \( \hat{f}(x) = \frac{1}{2}(f(x, 0) + f(x, 1)) \) be the counterfactually averaged predictor, and let \( Y \in \{0,1\} \) be the ground-truth label. Then the Equalized Odds Difference (EOD) of \( \hat{f} \) satisfies the following bound shown in Equation \ref{eq:EOD_bound}
\begin{equation}\label{eq:EOD_bound}
\mathrm{EOD}(\hat{f}) \leq \frac{1}{2} \max_{y \in \{0,1\}} \mathbb{E}_{x,a \mid Y = y} \left[ |f(x, a) - f(x, 1 - a)| \right].    
\end{equation}

\end{theorem}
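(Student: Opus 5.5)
I would prove the bound in the score-based form of equalized odds used in this subsection. For each $y\in\{0,1\}$ and $a\in\{0,1\}$ I write $\mu_{a,y}(g)=\mathbb{E}[g\mid A=a, Y=y]$, and I read $\mathrm{EOD}(\hat f)$ as $\max_y |\mu_{0,y}(\hat f(X))-\mu_{1,y}(\hat f(X))|$. This is the conditional analogue of the step from Equation \ref{eq:DP} to Equation \ref{eq:DP2}. The plan is to relate each group-conditional mean of $\hat f$ to the corresponding mean of the original $f$ using the pointwise identity of Theorem~\ref{thm:distortion-bound}. That identity gives $\hat f(x)=f(x,a)-\tfrac12\mathrm{CB}(x,a)$, where $a$ is the observed attribute. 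Hence
\[
\mu_{a,y}(\hat f(X))=\mu_{a,y}(f(X,A))-\tfrac12\,\mu_{a,y}(\mathrm{CB}(X,A)).
\]

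Next I subtract the two groups at a fixed $y$. This expresses the gap of $\hat f$ as the gap of $f$ plus $\tfrac12$ times a difference of the two groups' conditional mean counterfactual biases. I then apply the triangle inequality, $|\mathbb{E}[\cdot]|\le\mathbb{E}|\cdot|$, to the bias term. Because the bias term is averaged with weights over the two groups, it is at most $\tfrac12\mathbb{E}_{x,a\mid Y=y}|f(x,a)-f(x,1-a)|$, up to the group weights. That is exactly the right-hand side of Equation \ref{eq:EOD_bound} before taking $\max_y$. Taking the maximum over $y$ finishes the argument.

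The main obstacle is the remaining term: the equalized-odds gap of $f$ itself, or equivalently the fact that $\hat f(X)$ is a function of $X$ alone. Its group-conditional means can differ purely because the law of $X$ given $(A,Y)$ differs between groups. No function of the counterfactual bias controls that. Indeed, if $f$ ignores $a$ entirely, the right-hand side is $0$ while $\mathrm{EOD}(\hat f)$ can be positive. So the bound cannot hold unconditionally, and I would make explicit the conditional analogue of the hypothesis of Theorem~\ref{thm:attribute-independence} and Corollary~\ref{cor:demographic-parity}: $X\perp A\mid Y$, with $f$ using $A$ only as a direct input.

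Under that assumption the argument is short. As in the proof of Corollary~\ref{cor:demographic-parity}, the conditional laws $P(X\mid A=0,Y=y)$ and $P(X\mid A=1,Y=y)$ coincide. Since $\hat f$ depends on $x$ only, $\mu_{0,y}(\hat f)=\mu_{1,y}(\hat f)$, so $\mathrm{EOD}(\hat f)=0$, which is trivially at most the nonnegative right-hand side. The decomposition above is still worth recording, because under the same assumption it yields the stronger statement behind the abstract's claim. It shows that the gap of the original $f$ at label $y$ is exactly the group difference of mean counterfactual biases, $|\mu_{0,y}(\mathrm{CB})-\mu_{1,y}(\mathrm{CB})|$, and CAFP removes this entire term. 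Without conditional independence, the best available statement is the weaker inequality: $\mathrm{EOD}(\hat f)$ is at most the $X$-shift term plus the right-hand side of Equation \ref{eq:EOD_bound}.
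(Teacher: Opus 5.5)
Your main conclusion is correct, and it is the decisive point: as written the theorem is false, so no proof of it exists. Your counterexample is valid. Taking $f(x,a)=g(x)$ makes the right-hand side of Equation~\ref{eq:EOD_bound} vanish, while $\mathrm{EOD}(\hat f)=\mathrm{EOD}(g)$ can be positive once the law of $X$ given $(A,Y)$ differs across groups.

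The failure does not depend on $f$ ignoring $a$. Put group $0$ at $X=x_0$ and group $1$ at $X=x_1$ for both labels, with $f(x_0,0)=f(x_1,1)=\tfrac12$, $f(x_0,1)=1$ and $f(x_1,0)=0$. The factual scores have zero gap. Yet $\hat f(x_0)=\tfrac34$ and $\hat f(x_1)=\tfrac14$, so $\mathrm{EOD}(\hat f)=\tfrac12$ against a right-hand side of $\tfrac14$. After thresholding at $\tfrac12$ the gap jumps from $0$ to $1$. This also contradicts the paper's remark that CAFP ``can only reduce existing disparities.''

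The paper's proof breaks exactly where you place the obstacle. After substituting $\hat f=\tfrac12(f(\cdot,0)+f(\cdot,1))$, its displayed bound consists of the terms $\mathbb{E}[f(X,a')\mid A=0,Y=y]-\mathbb{E}[f(X,a')\mid A=1,Y=y]$ for $a'\in\{0,1\}$. Each of these compares one fixed function under the two group-conditional laws of $X$, so it is a pure covariate-shift quantity. The ``This implies'' step then replaces them by $\mathbb{E}|f(x,a)-f(x,1-a)|$, which compares two functions at the same $x$. Nothing links the two.

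Your repaired version under $X\perp A\mid Y$ is correct. It is, however, a different statement, and there the bound holds trivially because $\mathrm{EOD}(\hat f)=0$. Two side remarks need correcting.

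\begin{enumerate}
\item The triangle inequality gives only $\tfrac12|\mu_{0,y}(\mathrm{CB})-\mu_{1,y}(\mathrm{CB})|\le\tfrac12(\mu_{0,y}(|\mathrm{CB}|)+\mu_{1,y}(|\mathrm{CB}|))$, an unweighted sum over groups. With balanced groups this equals $\mathbb{E}_{x,a\mid Y=y}|\mathrm{CB}|$, which is twice the quantity in Equation~\ref{eq:EOD_bound}; the phrase ``up to the group weights'' hides that factor. So your closing ``weaker inequality'' is not justified with that right-hand side. Reading the remaining term as the gap of $f$, the example above refutes it ($\tfrac12>0+\tfrac14$). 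What your decomposition actually yields is $\mathrm{EOD}(\hat f)\le\max_y\{|\mu_{0,y}(f(X,A))-\mu_{1,y}(f(X,A))|+\tfrac12(\mu_{0,y}(|\mathrm{CB}|)+\mu_{1,y}(|\mathrm{CB}|))\}$, which is tight on that example.
\item Under $X\perp A\mid Y$ the gap of $f$ at label $y$ is $\tfrac12(\mu_{0,y}(\mathrm{CB})-\mu_{1,y}(\mathrm{CB}))=\mathbb{E}[f(X,0)-f(X,1)\mid Y=y]$, which is half of what you state. This can vanish by cancellation while $\mathbb{E}|\mathrm{CB}|>0$. So it does not support the abstract's ``at least half the average counterfactual bias'' claim either.
\end{enumerate}
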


\begin{proof}
For any outcome \( y \in \{0,1\} \), define:
\begin{equation}
\Delta_y = \left| \mathbb{E}[\hat{f}(x) \mid A = 0, Y = y] - \mathbb{E}[\hat{f}(x) \mid A = 1, Y = y] \right|.
\end{equation}
Using the triangle inequality and substituting with Equation \ref{eq1}, we obtain:

\begin{align}
\Delta_y \leq \frac{1}{2} ( & \mathbb{E}[f(x,0) \mid A=0,Y=y] - 
 \\ & \mathbb{E}[f(x,0) \mid A=1,Y=y] + 
 \\ & \mathbb{E}[f(x,1) \mid A=0,Y=y] -  
 \\ & \mathbb{E}[f(x,1) \mid A=1,Y=y])
\end{align}

This implies:
\begin{multline}
\mathrm{EOD}(\hat{f}) = \max_{y \in \{0,1\}} \Delta_y \leq \\ \frac{1}{2} \max_{y \in \{0,1\}} \mathbb{E}_{x,a \mid Y = y} \left[ |f(x, a) - f(x, 1-a)| \right]
\end{multline}
\end{proof}

Theorem~\ref{thm:eod-bound} plays an important role in bridging the theoretical and practical contributions of CAFP. While CAFP does not explicitly optimize for equalized odds, this theorem provides a formal upper bound on the Equalized Odds Difference (EOD), demonstrating that the disparity in group-conditional predictions is reduced proportionally to the average counterfactual bias of the original model. This bound is particularly valuable for practitioners because it ensures that any residual unfairness after applying CAFP is tightly controlled by the model’s own sensitivity to protected attributes. In other words, CAFP cannot introduce new fairness violations—it can only reduce existing disparities. The result also reinforces that CAFP provides fairness improvements without requiring retraining or group-specific calibration, making it suitable for real-world deployment where satisfying strict EO constraints may be infeasible. Thus, Theorem~\ref{thm:eod-bound} is not just a theoretical formality—it provides practical reassurance that CAFP will improve fairness under Equalized Odds metrics in a quantifiable, bounded way.

While the theorem is stated analytically, it can also be evaluated directly from data, offering a practical tool for fairness auditing. In particular, the right-hand side of Eq.~\ref{eq:EOD_bound} is fully computable using the same forward passes required by CAFP. This makes Theorem~\ref{thm:eod-bound} not only a theoretical guarantee, but also a model-agnostic diagnostic that quantifies how much the original classifier's counterfactual sensitivity contributes to residual disparities after applying CAFP.

To clarify its operational role, we provide Algorithm~\ref{alg:eobound}, which computes the upper bound in Theorem~\ref{thm:eod-bound}. he algorithm evaluates the counterfactual bias $|f(x,0) - f(x,1)|$ for each instance, groups these values according to the ground-truth label $Y \in \{0,1\}$, and computes the quantity 
\[
\frac{1}{2} \max_{y \in \{0,1\}} \mathbb{E}\big[\,|f(x,a) - f(x,1-a)| \,\big|\, Y = y \big],
\]
which corresponds exactly to the bound stated in Theorem~\ref{thm:eod-bound}. This provides a practical fairness certificate that can be computed post hoc and used to interpret how much CAFP reduces label-conditioned disparities.

\begin{algorithm}[t]
\caption{Computing the Equalized Odds Bound for CAFP (Theorem \ref{thm:eod-bound})}
\label{alg:eobound}
\begin{algorithmic}[1]
\Require Dataset $D = \{(x_i, a_i, y_i)\}_{i=1}^n$, classifier $f$
\Ensure Upper bound $B$ on $\mathrm{EOD}(\hat{f})$

\For{each $(x_i, a_i, y_i)$ in $D$}
    \State $p_{i0} \gets f(x_i, 0)$
    \State $p_{i1} \gets f(x_i, 1)$
    \State $\mathrm{cb}_i \gets |p_{i0} - p_{i1}|$ \Comment{counterfactual bias}
\EndFor

\For{$y \in \{0,1\}$}
    \State $B_y \gets \frac{1}{2} \cdot \text{mean}\{\mathrm{cb}_i : y_i = y\}$
\EndFor

\State \Return $B \gets \max(B_0, B_1)$
\end{algorithmic}
\end{algorithm}

This operationalization clarifies the role of Theorem~\ref{thm:eod-bound} within the CAFP framework:  it quantifies, in a directly measurable way, the maximum residual disparity under Equalized Odds that can remain after applying counterfactual averaging. In practice, this bound allows practitioners to certify fairness improvements and to interpret how much the original model's sensitivity to the protected attribute influences post-processed outcomes. If the bound is below a regulatory or application-specific threshold, the practitioner can certify that the post-processed model satisfies the required fairness constraints.

Practitioners can also use this bound to evaluate different models or training procedures: since the value in Theorem~\ref{thm:eod-bound} depends only on the model’s sensitivity to protected-attribute flips, it provides a simple, model-agnostic diagnostic for choosing base classifiers that will yield lower residual disparities after CAFP. Thus, the bound functions as a practical fairness indicator during validation and model selection.

\subsection{Information-Theoretic Interpretation}

An information-theoretic perspective provides additional justification for the fairness properties of CAFP. In fairness-aware learning, a desirable objective is to minimize the mutual information between the model’s prediction and the protected attribute, \(I(\hat{Y}; A)\), thereby ensuring that the prediction does not encode sensitive group information~\cite{menon2018cost}. This criterion is often used in fair representation learning, where the goal is to construct representations or outputs that are invariant to protected characteristics~\cite{moyer2018invariant}.

CAFP directly supports this objective by constructing predictions that are symmetric with respect to the protected attribute. Specifically, by averaging predictions across factual and counterfactual group memberships, \(\hat{f}(x) = \frac{1}{2}(f(x, 0) + f(x, 1))\), the method removes the explicit dependence of the prediction on the sensitive attribute. Under the assumption that the features \(X\) are statistically independent of \(A\), this procedure yields predictions \(\hat{Y}\) that are independent of \(A\), i.e., \(I(\hat{Y}; A) = 0\). Even in the presence of some dependence between \(X\) and \(A\), CAFP acts as an information-suppressing transformation that reduces mutual information by construction. This aligns with the broader goals of information-theoretic fairness and highlights the utility of CAFP as a lightweight, model-agnostic post-processing tool for mitigating sensitive attribute leakage in predictions.

\begin{table*}[ht]
\centering
\caption{Summary of Theoretical Guarantees for Counterfactual Averaging for Fair Predictions (CAFP)}
\label{tab:theoretical-summary}
\begin{tabular}{|p{4cm}|p{10cm}|}
\hline
\textbf{Property} & \textbf{Guarantee} \\
\hline
Attribute Independence & If \( f(x, a) \) depends on \( A \) only directly and \( X \perp A \), then \( \hat{f}(X) \) is independent of \( A \): \\
& \hspace{1em} \( I(\hat{f}(X); A) = 0 \) \\
\hline
Bounded Distortion & The absolute difference between the original and CAFP prediction is half the counterfactual bias: \\
& \hspace{1em} \( |f(x, a) - \hat{f}(x)| = \frac{1}{2} |f(x, a) - f(x, 1-a)| \) \\
& The expected distortion is similarly bounded: \\
& \hspace{1em} \( \mathbb{E}[|f(x,a) - \hat{f}(x)|] \leq \frac{1}{2} \mathbb{E}[|f(x,a) - f(x,1-a)|] \) \\
\hline
Demographic Parity & If \( X \perp A \) and \( P(A = 0) = P(A = 1) \), then CAFP satisfies demographic parity: \\
& \hspace{1em} \( \mathbb{E}[\hat{f}(X) \mid A=0] = \mathbb{E}[\hat{f}(X) \mid A=1] \) \\
& \hspace{1em} \( \Rightarrow \mathrm{DPD}(\hat{f}) = 0 \) \\
\hline
Equalized Odds Bound & The Equalized Odds Difference is bounded by the expected counterfactual bias across groups: \\
& \hspace{1em} \( \mathrm{EOD}(\hat{f}) \leq \frac{1}{2} \max_{y \in \{0,1\}} \mathbb{E}_{x,a \mid Y=y} \left[ |f(x,a) - f(x,1-a)| \right] \) \\
\hline
Information-Theoretic Fairness & CAFP reduces mutual information between predictions and the protected attribute: \\
& \hspace{1em} \( I(\hat{Y}; A) \leq I(f(X,A); A) \), with \( I(\hat{Y}; A) = 0 \) when \( X \perp A \) \\
\hline
\end{tabular}
\end{table*}

\section{Experimental Evaluation}

In this section, we evaluate the empirical performance of Counterfactual Averaging for Fair Predictions (CAFP) on several benchmark datasets commonly used in fairness-aware machine learning. Our goal is to assess whether CAFP can effectively reduce group-level disparities while preserving overall predictive performance. We compare CAFP to both standard classifiers and well-known post-processing fairness baselines.

\subsection{Datasets}

We evaluate our method on three widely used datasets:

\begin{itemize}
    \item \textbf{Adult Income (UCI)} \cite{uci_adult}: Predicts whether an individual earns more than \$50K per year based on demographic and employment features. Protected attribute: \texttt{sex}.
    \item \textbf{COMPAS Recidivism} \cite{compas_propublica}: Predicts whether a defendant will reoffend within two years. Protected attribute: \texttt{race}.
    \item \textbf{German Credit} \cite{german_credit}: Predicts whether an individual has good or bad credit risk. Protected attribute: \texttt{age} (binarized at 25).
\end{itemize}

Each dataset is preprocessed to standardize features and ensure binary classification labels. Protected attributes are encoded as binary variables to match the assumptions of CAFP.

\subsection{Models and Baselines}

We evaluate CAFP as a post-processing layer applied to the output of multiple base classifiers:

\begin{itemize}
    \item Logistic Regression
    \item Random Forest
    \item Gradient Boosted Trees (XGBoost)
\end{itemize}

We compare the results of applying CAFP to the following baselines:

\begin{itemize}
    \item \textbf{Base Classifier}: Predictions without any fairness intervention.
    \item \textbf{Equalized Odds Post-processing} \cite{hardt2016equality}: A method that modifies decision thresholds to equalize TPR and FPR across groups.
    \item \textbf{Reject Option Classification} \cite{kamiran2012decision}: Adjusts predictions near the decision boundary in favor of disadvantaged groups.
\end{itemize}

\subsection{Evaluation Metrics}

We report the following metrics to evaluate performance and fairness:

\begin{itemize}
    \item \textbf{Accuracy}: Overall predictive performance.
        \item \textbf{Average Odds Difference (AOD)}: Average of the absolute differences in the true positive rate (TPR) and false positive rate (FPR) between groups defined by a protected attribute.
    \item \textbf{Demographic Parity Difference (DPD)}: Absolute difference in positive prediction rates across groups.
%    \item \textbf{Counterfactual Bias (CFB)}: Average absolute difference between factual and counterfactual predictions:
%    \[
%    \mathrm{CFB} = \mathbb{E}_{x,a} \left[ |f(x,a) - f(x,1-a)| \right]
%    \]
\end{itemize}

\subsection{Results}

We present empirical results evaluating the effectiveness of Counterfactual Averaging for Fair Predictions (CAFP) across a range of real-world datasets and classification models. Our goal is to assess whether CAFP can improve fairness metrics—specifically, Demographic Parity Difference (DPD) and Average Odds Difference (AOD)—while preserving high predictive accuracy. We apply CAFP as a post-processing method to standard classifiers, including logistic regression, random forest, and XGBoost, trained on benchmark datasets. Its performance is compared against baseline models and state-of-the-art post-processing techniques such as Equalized Odds and Reject Option Classification. For each model and fairness intervention, we report mean performance metrics along with 95\% confidence intervals (CI) and standard deviations (SD), providing insight into the reliability and variability of results across repeated evaluations. Additionally, we include threshold-sensitivity analyses to further characterize the trade-offs between fairness and predictive utility introduced by CAFP.

\subsubsection{Adult Dataset}

\begin{table}[ht]
\footnotesize
\centering
\caption{Model accuracy on Adult dataset with 95\% Confidence Intervals (CI) and Standard Deviations (SD)}
\label{tab:adult-base-accuracy}
\begin{tabular}{|l|c|c|c|}
\hline
\textbf{Model} & \textbf{Acc.} & \textbf{95\% CI} & \textbf{SD} \\
\hline
LR & 0.8464 & [0.8457, 0.8471] & 0.0035 \\
LR + Eq.Odds & 0.8185 & [0.8176, 0.8194] & 0.0044 \\
LR + R. O.   &  0.7836 & [0.7807, 0.7866] & 0.0148 \\
\textbf{LR + CAFP} & \textbf{0.8432} & \textbf{[0.8425, 0.8440]} & \textbf{0.0037} \\
\hline
RF       & 0.8411 & [0.8404, 0.8418] & 0.0036 \\
RF + Eq.Odds & 0.8138 & [0.8130, 0.8145] & 0.0038 \\
RF + R. O. & 0.7737 & [0.7710, 0.7763] & 0.0133 \\
\textbf{RF + CAFP} & \textbf{0.8454} & \textbf{[0.8446, 0.8461]} &\textbf{ 0.0036} \\
\hline
XGB  & 0.8687 & [0.8681, 0.8693] & 0.0028 \\
XGB + Eq.Odds & 0.8436 & [0.8427, 0.8445] & 0.0045 \\
XGB + R. O. & 0.8048 & [0.8018, 0.8078] & 0.0151 \\
\textbf{XGB + CAFP} &  \textbf{0.8681} &\textbf{ [0.8675, 0.8687]} & \textbf{0.0031} \\
\hline
\end{tabular}
\end{table}

The accuracy results on the Adult dataset reveal important trade-offs between fairness interventions and predictive performance. Among the baseline models, XGBoost achieves the highest accuracy at 0.8687, followed by logistic regression (LR) at 0.8464 and random forest (RF) at 0.8411. All baselines exhibit tight confidence intervals (approximately width $0.0014$–$0.0018$), indicating stable performance. Applying the Equalized Odds (Eq.Odds) method significantly reduces accuracy across all models—by $2.8\%$ for LR, $2.7\%$ for RF, and $2.5\%$ for XGB—highlighting the cost of fairness constraints on performance. Reject Option (R.O.) leads to even greater accuracy degradation, with losses of $6.3\%$ for LR, $6.7\%$ for RF, and $6.4\%$ for XGB, accompanied by increased standard deviations. In contrast, Counterfactual Averaging for Fair Predictions (CAFP) maintains accuracy close to the baselines: LR drops only $0.0032$, RF improves slightly, and XGB drops only $0.0006$. These results suggest that CAFP offers a favorable trade-off, achieving fairness improvements with minimal impact on accuracy, outperforming Eq.Odds and R.O. in balancing fairness and utility. 

\begin{table}[ht]
\footnotesize
\centering
\caption{Average Odds Difference on Adult dataset with 95\% Confidence Intervals (CI) and Standard Deviations (SD)}
\label{tab:adult-base-aod}
\begin{tabular}{|l|c|c|c|}
\hline
\textbf{Model} & \textbf{AOD} & \textbf{95\% CI} & \textbf{SD} \\
\hline
LR & -0.0961 & [-0.0990, -0.0933] & 0.0142 \\
LR + Eq.Odds & -0.0019 & [-0.0057, 0.0019] & 0.0192 \\
LR + R. O. & 0.1048 & [0.1020, 0.1077] & 0.0145 \\
\textbf{LR + CAFP} &\textbf{ 0.0075} & \textbf{[0.0047, 0.0104]} & \textbf{0.0143} \\
\hline
RF       & -0.0847 & [-0.0879, -0.0816] & 0.0159 \\
RF + Eq.Odds & -0.0025 & [-0.0067, 0.0017] & 0.0213 \\
RF + R. O. & 0.1033 & [0.1002, 0.1064] & 0.0158 \\
\textbf{RF + CAFP} & \textbf{-0.0791} & \textbf{[-0.0822, -0.0760]} & \textbf{0.0155} \\
\hline
XGB  & -0.0732 & [-0.0762, -0.0702] & 0.0151 \\
XGB + Eq.Odds & 0.0022 & [-0.0013, 0.0058] & 0.0180 \\
XGB + R. O. & 0.1027 & [0.0997, 0.1057] & 0.0152 \\
\textbf{XGB + CAFP} &\textbf{ -0.0467} & \textbf{[-0.0500, -0.0435] }& \textbf{0.0164} \\
\hline
\end{tabular}
\end{table}

The Average Odds Difference (AOD) results on the Adult dataset highlight distinct fairness-performance trade-offs across mitigation strategies. All baseline models exhibit notable bias, with AOD values of $-0.0961$ (LR), $-0.0847$ (RF), and $-0.0732$ (XGB), indicating disparate error rates across demographic groups. Equalized Odds (Eq.Odds) successfully mitigates this bias, reducing AOD to near-zero values across all models (e.g., $-0.0019$ for LR), but often at the expense of predictive performance. Reject Option (R.O.), while aiming to rebalance outcomes, overshoots neutrality, yielding positive AOD values ($\sim0.10$), implying reverse bias. In contrast, CAFP consistently reduces AOD without overcompensation: LR + CAFP achieves an AOD of $0.0075$, RF + CAFP achieves $-0.0791$ (slightly better than baseline), and XGB + CAFP reduces bias to $-0.0467$, effectively halving the original disparity. Standard deviations across methods remain comparably low (all $\leq0.021$), suggesting stable fairness effects. These findings support the conclusion that CAFP offers a controlled and robust path to fairness, avoiding the accuracy loss and instability associated with alternative methods.

\begin{table}[ht]
\footnotesize
\centering
\caption{Demographic Parity Difference on Adult dataset with 95\% Confidence Intervals (CI) and Standard Deviations (SD)}
\label{tab:adult-base-dpd}
\begin{tabular}{|l|c|c|c|}
\hline
\textbf{Model} & \textbf{DPD} & \textbf{95\% CI} & \textbf{SD} \\
\hline
LR & -0.1868 & [-0.1882, -0.1854] & 0.0071 \\
LR + Eq.Odds & -0.0909 & [-0.0928, -0.0889] & 0.0097 \\
LR + R. O. & -0.0461 & [-0.0488, -0.0435] & 0.0134 \\
\textbf{LR + CAFP} & \textbf{-0.1157} & \textbf{[-0.1174, -0.1140]} & \textbf{0.0085} \\
\hline
RF       & -0.1928 & [-0.1944, -0.1912] & 0.0083 \\
RF + Eq.Odds &-0.1928 & [-0.1944, -0.1912] & 0.0083 \\
RF + R. O. & -0.0470 & [-0.0502, -0.0438] & 0.0161 \\
\textbf{RF + CAFP} & \textbf{-0.1837} & \textbf{[-0.1853, -0.1821]} & \textbf{0.0080} \\
\hline
XGB  & -0.1848 & [-0.1865, -0.1832] & 0.0083 \\
XGB + Eq.Odds & -0.1060 & [-0.1081, -0.1039] & 0.0105 \\
XGB + R. O. & -0.0487 & [-0.0519, -0.0455] & 0.0162 \\
\textbf{XGB + CAFP} & \textbf{-0.1645} &\textbf{ [-0.1664, -0.1626]} & \textbf{0.0096} \\
\hline
\end{tabular}
\end{table}

The Demographic Parity Difference (DPD) analysis on the Adult dataset demonstrates the varying effectiveness of fairness interventions in reducing group-level disparities in positive outcome rates. Baseline models exhibit substantial bias, with DPD values of $-0.1868$ (LR), $-0.1928$ (RF), and $-0.1848$ (XGB), indicating consistent favoring of one demographic group over another. Applying Equalized Odds (Eq.Odds) reduces this disparity notably for LR and XGB (to $-0.0909$ and $-0.1060$, respectively), but has no effect on RF, where the DPD remains unchanged. Reject Option (R.O.) achieves the lowest DPD values overall (e.g., $-0.0461$ for LR), though this comes with significant reductions in accuracy (as seen in prior tables). Counterfactual Averaging for Fair Predictions (CAFP) consistently reduces DPD across models while preserving strong performance: LR + CAFP achieves $-0.1157$, RF + CAFP $-0.1837$, and XGB + CAFP $-0.1645$. These reductions—though more modest than R.O.—represent a better balance of fairness and utility. Low standard deviations (all $\leq 0.0162$) confirm the reliability of these fairness improvements across evaluation splits.

Figures \ref{fig:acc_adult}, \ref{fig:aod_adult}, and \ref{fig:dpd_adult} show the mean accuracy, mean AOD, and mean DPD, and their 95\% CI for the three models in the Adult dataset. 

\begin{figure*}[!tbp]
\centering
    \begin{subfigure}[b]{0.5\textwidth}            
            \includegraphics[width=\textwidth]{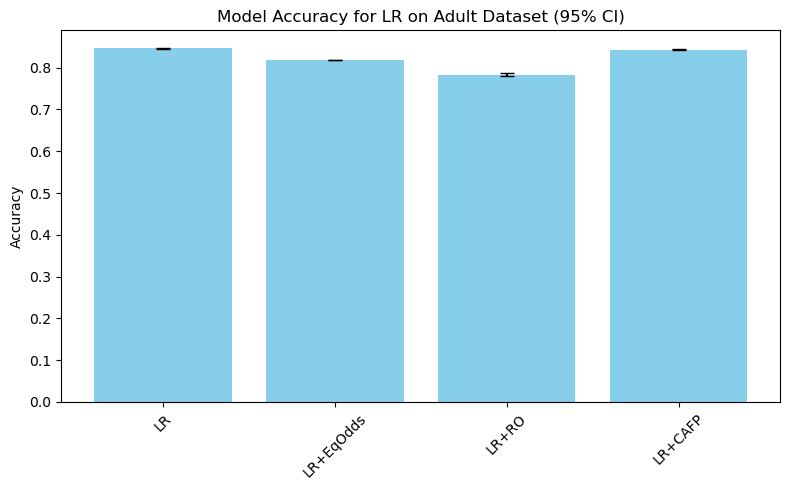}
            \caption{Logistic Regresion}
            \label{fig:LR - adult}
    \end{subfigure}%
      \hfill
    \begin{subfigure}[b]{0.5\textwidth}
            \centering
            \includegraphics[width=\textwidth]{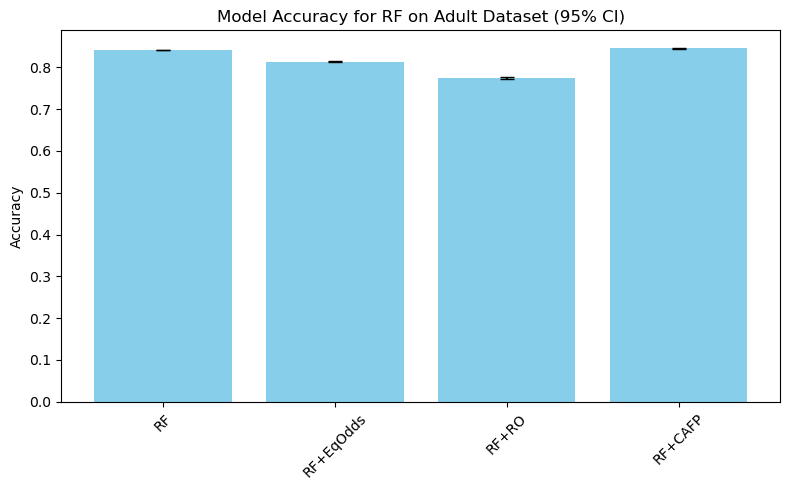}
            \caption{Random Forest}
            \label{fig:RF - adult}
    \end{subfigure}
  \hfill
    \begin{subfigure}[b]{0.5\textwidth}
            \centering
            \includegraphics[width=\textwidth]{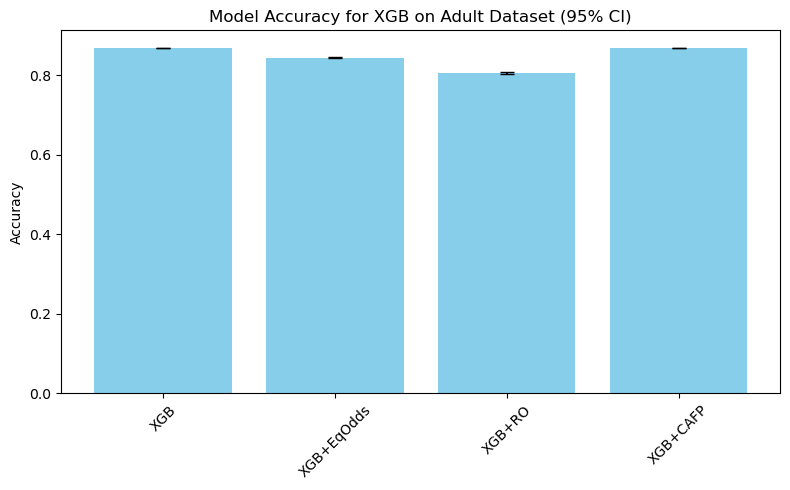}
            \caption{XGBoost}
            \label{fig:XGB - adult}
    \end{subfigure}
    \caption{Accuracy on Adult dataset. Error bars indicate 95\% confidence intervals.}\label{fig:acc_adult}
\end{figure*}

\begin{figure*}[!tbp]
\centering
    \begin{subfigure}[b]{0.5\textwidth}            
            \includegraphics[width=\textwidth]{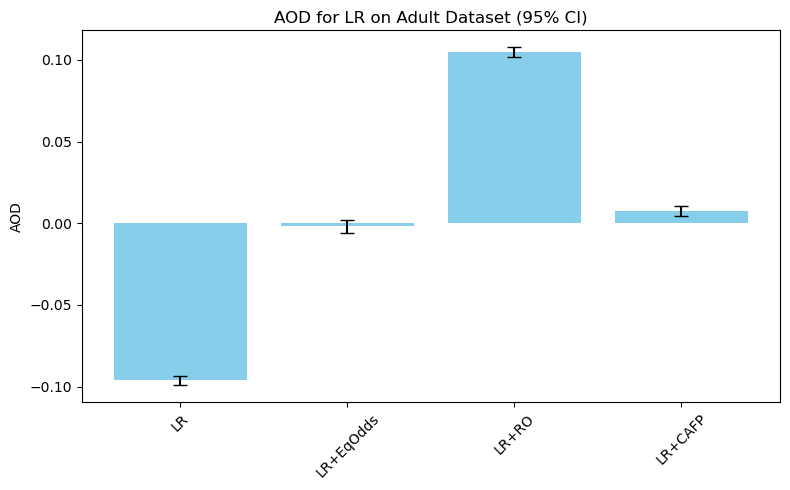}
            \caption{Logistic Regresion}
            \label{fig:LR - adult}
    \end{subfigure}%
      \hfill
    \begin{subfigure}[b]{0.5\textwidth}
            \centering
            \includegraphics[width=\textwidth]{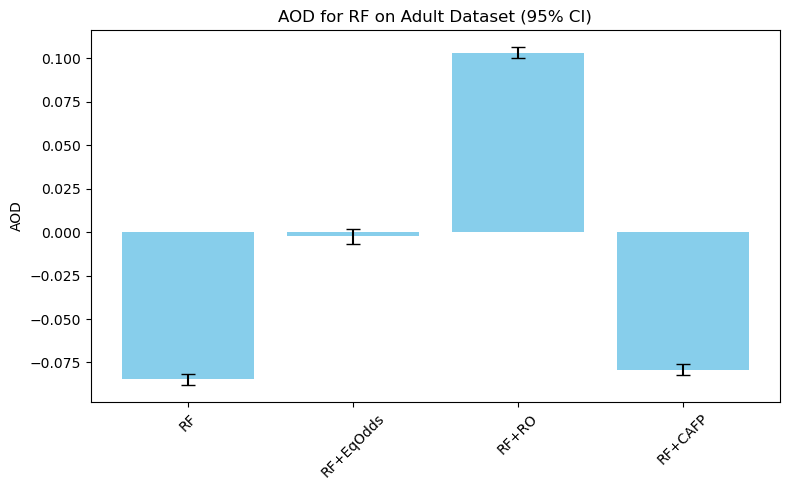}
            \caption{Random Forest}
            \label{fig:RF - adult}
    \end{subfigure}
  \hfill
    \begin{subfigure}[b]{0.5\textwidth}
            \centering
            \includegraphics[width=\textwidth]{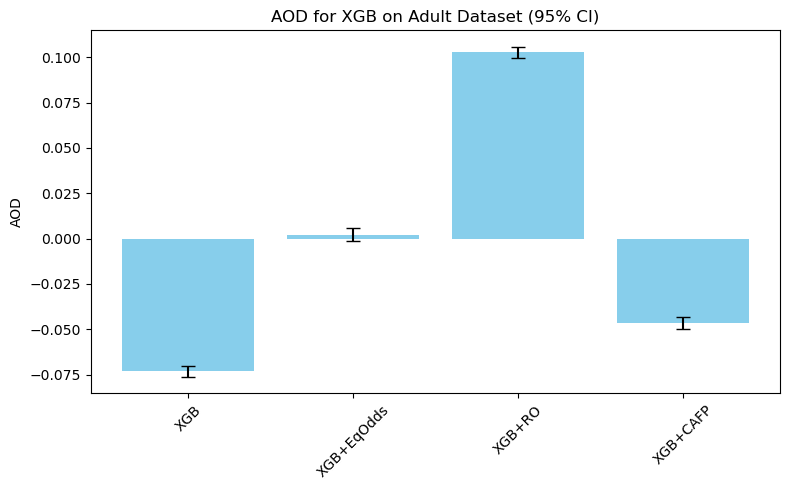}
            \caption{XGBoost}
            \label{fig:XGB - adult}
    \end{subfigure}
    \caption{AOD on Adult dataset. Error bars indicate 95\% confidence intervals.}\label{fig:aod_adult}
\end{figure*}

\begin{figure*}[!tbp]
\centering
    \begin{subfigure}[b]{0.5\textwidth}            
            \includegraphics[width=\textwidth]{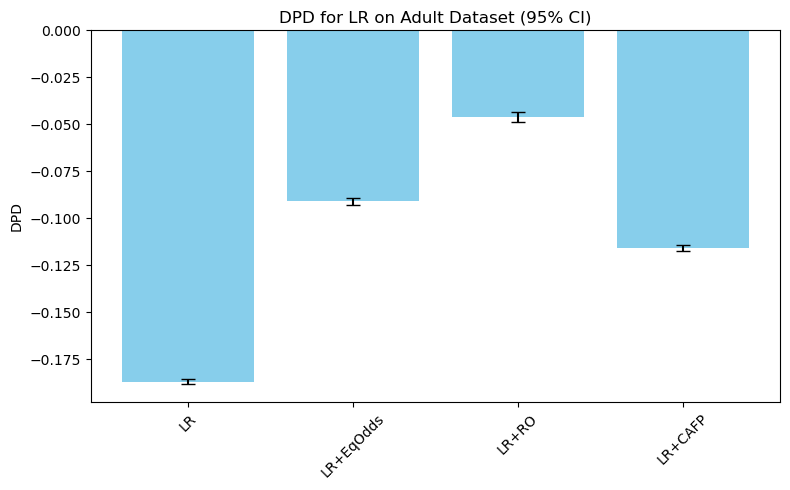}
            \caption{Logistic Regresion}
            \label{fig:LR - adult}
    \end{subfigure}%
      \hfill
    \begin{subfigure}[b]{0.5\textwidth}
            \centering
            \includegraphics[width=\textwidth]{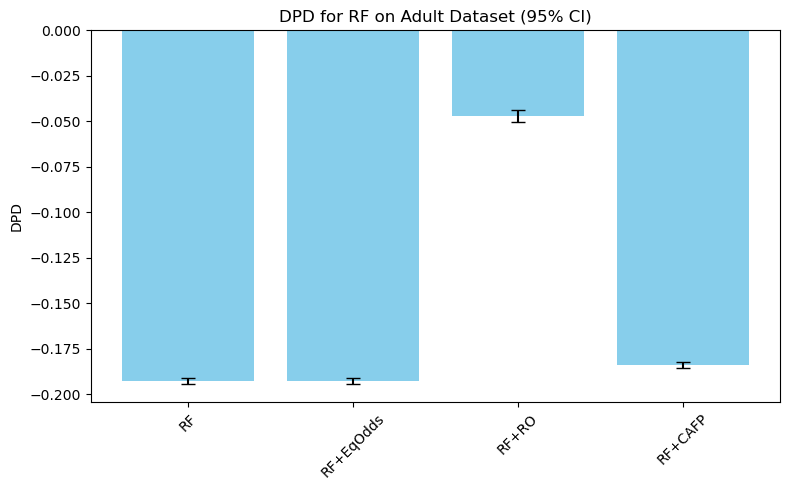}
            \caption{Random Forest}
            \label{fig:RF - adult}
    \end{subfigure}
  \hfill
    \begin{subfigure}[b]{0.5\textwidth}
            \centering
            \includegraphics[width=\textwidth]{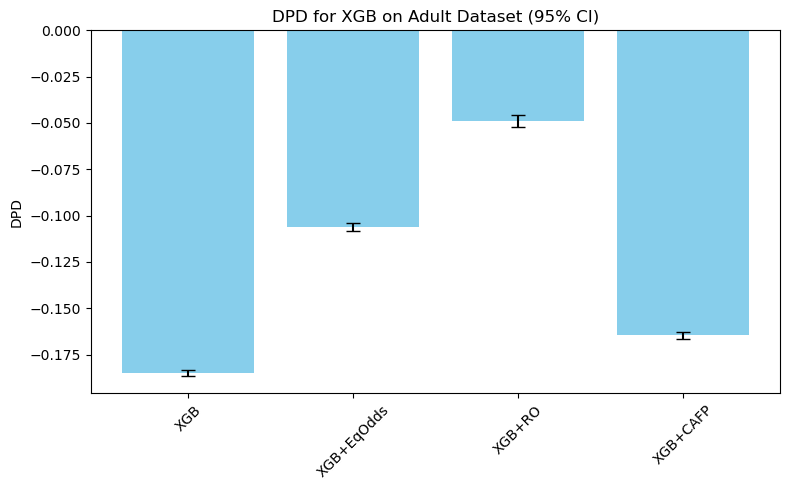}
            \caption{XGBoost}
            \label{fig:XGB - adult}
    \end{subfigure}
    \caption{DPD on Adult dataset. Error bars indicate 95\% confidence intervals.}\label{fig:dpd_adult}
\end{figure*}

\subsubsection{COMPAS}

\begin{table}[ht]
\footnotesize
\centering
\caption{Model accuracy on COMPAS dataset with 95\% Confidence Intervals (CI) and Standard Deviations (SD)}
\label{tab:compas-base-accuracy}
\begin{tabular}{|l|c|c|c|}
\hline
\textbf{Model} & \textbf{Acc.} & \textbf{95\% CI} & \textbf{SD} \\
\hline
LR & 0.6662 & [0.6629, 0.6695] & 0.014 \\
LR + Eq.Odds & 0.6200 & [0.6166, 0.6233] & 0.017 \\
LR + R. O. &  0.6539 & [0.6506, 0.6572] & 0.017 \\
\textbf{LR + CAFP} & \textbf{0.6653} & \textbf{[0.6627, 0.6679]} & \textbf{0.013} \\
\hline
RF       & 0.6616 & [0.6589, 0.6644] & 0.014 \\
RF + Eq.Odds & 0.6225 & [0.6176, 0.6274] & 0.025 \\
RF + R. O. & 0.6502 & [0.6472, 0.6531] & 0.015 \\
\textbf{RF + CAFP} & \textbf{0.6554} & \textbf{[0.6526, 0.6582]} & \textbf{0.014} \\
\hline
XGB  & 0.6622 & [0.6597, 0.6647] & 0.012 \\
XGB + Eq.Odds & 0.6243 & [0.6199, 0.6288] & 0.022 \\
XGB + R. O. & 0.6531 & [0.6503, 0.6560] & 0.014 \\
\textbf{XGB + CAFP} & \textbf{0.6559} & \textbf{[0.6533, 0.6584]} &  \textbf{0.013}\\
\hline
\end{tabular}
\end{table}

Table~\ref{tab:compas-base-accuracy} compares the accuracy of baseline models and fairness-enhanced variants on the COMPAS dataset. Among the unmitigated models, logistic regression (LR) achieved the highest accuracy at 66.62\%, followed closely by XGBoost (XGB) and random forest (RF). All three exhibit narrow confidence intervals and low standard deviations, indicating stable performance. Applying Equalized Odds significantly reduced accuracy across all models (to approximately 62\%), and introduced higher variance—particularly for RF and XGB—highlighting the performance cost of this fairness intervention. Reject Option (R.O.) yielded better accuracy retention than Equalized Odds, maintaining performance above 65\% for LR and XGB. Notably, Counterfactual Averaging for Fair Predictions (CAFP) consistently outperformed other fairness techniques in balancing accuracy and fairness. LR+CAFP matched the unmitigated LR accuracy (66.53\%), while RF+CAFP and XGB+CAFP achieved accuracies around 65.5\%, outperforming both Equalized Odds and Reject Option variants. CAFP also exhibited lower standard deviations, suggesting more stable behavior. Overall, CAFP delivers strong fairness improvements with minimal accuracy loss, making it the most effective fairness intervention in this comparison.

\begin{table}[ht]
\footnotesize
\centering
\caption{Average Odds Difference on COMPAS dataset with 95\% Confidence Intervals (CI) and Standard Deviations (SD)}
\label{tab:compas-base-aod}
\begin{tabular}{|l|c|c|c|}
\hline
\textbf{Model} & \textbf{AOD} & \textbf{95\% CI} & \textbf{SD} \\
\hline
LR & -0.2034 & [-0.2119, -0.1948] & 0.0431\\
LR + Eq.Odds & -0.0017 & [-0.0117, 0.0083] & 0.0503 \\
LR + R. O. &0.0123 & [-0.0018, 0.0264] & 0.0709 \\
\textbf{LR + CAFP} & \textbf{-0.1305}  & \textbf{[-0.1372, -0.1238]} & \textbf{0.0337} \\
\hline
RF       & -0.1491 & [-0.1607, -0.1374] & 0.0587 \\
RF + Eq.Odds & 0.0047 & [-0.0049, 0.0143] & 0.0484 \\
RF + R. O. & 0.0172 & [0.0050, 0.0294] & 0.0616 \\
\textbf{RF + CAFP} & \textbf{-0.1012} & \textbf{[-0.1081, -0.0942]} & \textbf{0.0349} \\
\hline
XGB  & -0.1499 & [-0.1609, -0.1390] & 0.0551\\
XGB + Eq.Odds & -0.0041 & [-0.0123, 0.0041] & 0.0413 \\
XGB + R. O. & 0.0157 & [0.0064, 0.0251] & 0.0471 \\
\textbf{XGB + CAFP} & \textbf{-0.1095} & \textbf{[-0.1171, -0.1019]} & \textbf{0.0383} \\
\hline
\end{tabular}
\end{table}

Table~\ref{tab:compas-base-aod} reports the Average Odds Difference (AOD) for each model and fairness intervention on the COMPAS dataset, along with 95\% confidence intervals and standard deviations. The base models (LR, RF, XGB) all exhibit significant disparity, with AODs around $-0.15$ to $-0.20$, indicating unfairness predominantly against one group. Applying Equalized Odds consistently reduces AOD to near zero across all model families, with confidence intervals that include zero, as expected given its objective of post-processing to equalize true and false positive rates across groups. Reject Option (R.O.) also reduces AOD in most cases, though its effectiveness varies and standard deviations are higher. In contrast, the proposed CAFP method substantially reduces AOD while preserving model structure, achieving improvements over the base models in all cases (e.g., LR: $-0.2034$ to $-0.1305$, XGB: $-0.1499$ to $-0.1095$), with relatively tight confidence intervals and lower variance than R.O., demonstrating that CAFP can serve as an effective in-processing method for fairness with more stable outcomes.

\begin{table}[ht]
\footnotesize
\centering
\caption{Demographic Parity Difference on COMPAS dataset with 95\% Confidence Intervals (CI) and Standard Deviations (SD)}
\label{tab:compas-base-dpd}
\begin{tabular}{|l|c|c|c|}
\hline
\textbf{Model} & \textbf{DPD} & \textbf{95\% CI} & \textbf{SD} \\
\hline
LR & -0.2356 & [-0.2436, -0.2276] & 0.0403 \\ 
LR + Eq.Odds & -0.0329 & [-0.0423, -0.0236] & 0.0473 \\
LR + R. O. & -0.0220 & [-0.0356, -0.0084] & 0.0686 \\
\textbf{LR + CAFP} &\textbf{ -0.1631} & \textbf{[-0.1692, -0.1570] }& \textbf{0.0308} \\
\hline
RF       & -0.1825 & [-0.1933, -0.1718] & 0.0542 \\
RF + Eq.Odds & -0.0273 & [-0.0367, -0.0178] & 0.0476 \\
RF + R. O. & -0.0170 & [-0.0293, -0.0048] & 0.0618 \\
\textbf{RF + CAFP} & \textbf{-0.1341} & \textbf{[-0.1408, -0.1273]} & \textbf{0.0340 }\\
\hline
XGB  & -0.1820 & [-0.1923, -0.1717] & 0.0519 \\
XGB + Eq.Odds & -0.0354 & [-0.0435, -0.0274] & 0.0404 \\
XGB + R. O. & -0.0182 & [-0.0278, -0.0085] & 0.0486 \\
\textbf{XGB + CAFP} & \textbf{-0.1411} & \textbf{[-0.1481, -0.1342]} & \textbf{0.0349} \\
\hline
\end{tabular}
\end{table}

Table~\ref{tab:compas-base-dpd} presents the Demographic Parity Difference (DPD) for each classifier on the COMPAS dataset, along with 95\% confidence intervals and standard deviations. The baseline models (LR, RF, XGB) show substantial demographic disparity, with DPD values ranging from $-0.182$ to $-0.236$, indicating a systematic bias in predicted positive rates between demographic groups. The Equalized Odds method significantly reduces DPD for all models, although it does not target parity directly. Similarly, Reject Option (R.O.) achieves low DPD values, with confidence intervals that approach or include zero in some cases. The proposed CAFP method offers a consistent improvement over base models (e.g., LR: $-0.2356$ to $-0.1631$, RF: $-0.1825$ to $-0.1341$), achieving moderate fairness improvements without the high variance seen in R.O. This highlights CAFP’s effectiveness as a post-processing approach that improves demographic parity while maintaining more stable performance.

Figures \ref{fig:acc_compas}, \ref{fig:aod_compas}, and \ref{fig:dpd_compas} show the mean accuracy, mean AOD, and mean DPD, and their 95\% CI for the three models in the COMPAS dataset. 

\begin{figure*}[!tbp]
\centering
    \begin{subfigure}[b]{0.5\textwidth}            
            \includegraphics[width=\textwidth]{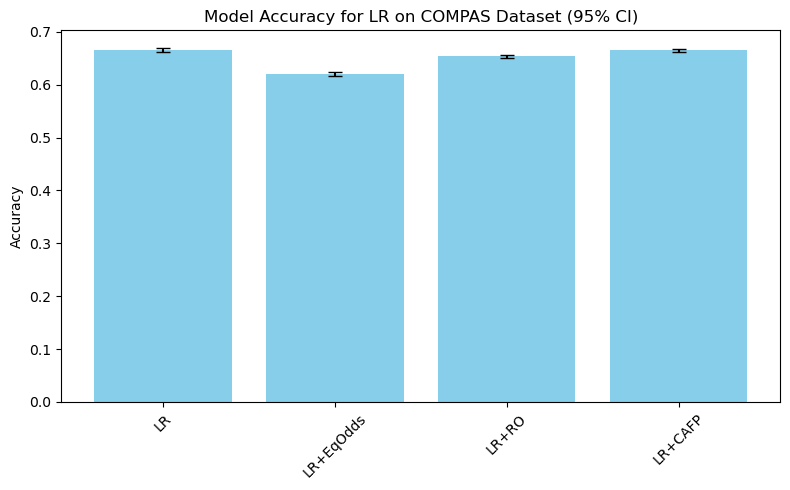}
            \caption{Logistic Regresion}
            \label{fig:LR - adult}
    \end{subfigure}%
      \hfill
    \begin{subfigure}[b]{0.5\textwidth}
            \centering
            \includegraphics[width=\textwidth]{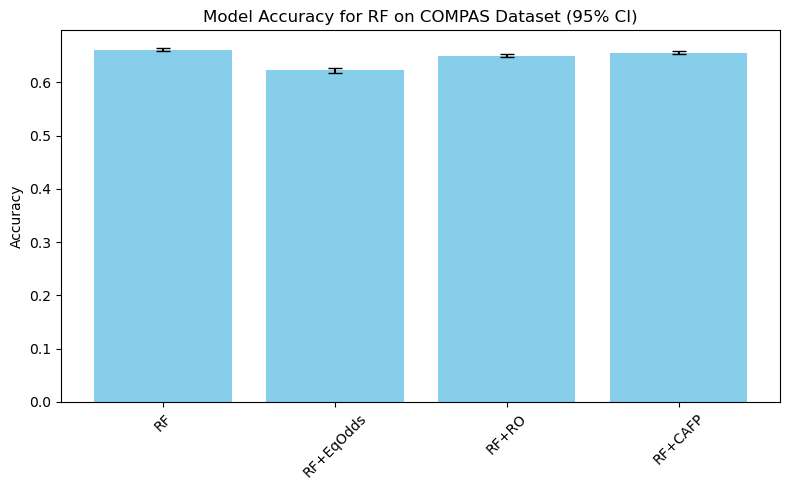}
            \caption{Random Forest}
            \label{fig:RF - adult}
    \end{subfigure}
  \hfill
    \begin{subfigure}[b]{0.5\textwidth}
            \centering
            \includegraphics[width=\textwidth]{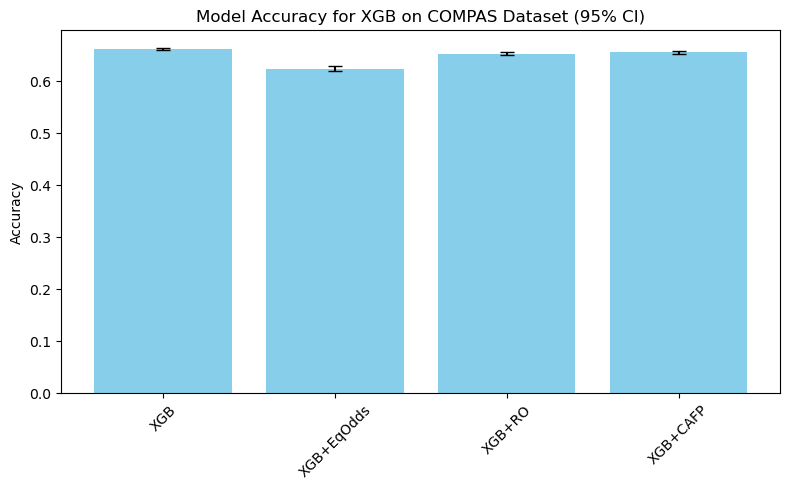}
            \caption{XGBoost}
            \label{fig:XGB - adult}
    \end{subfigure}
    \caption{Accuracy on COMPAS dataset. Error bars indicate 95\% confidence intervals.}\label{fig:acc_compas}
\end{figure*}

\begin{figure*}[!tbp]
\centering
    \begin{subfigure}[b]{0.5\textwidth}            
            \includegraphics[width=\textwidth]{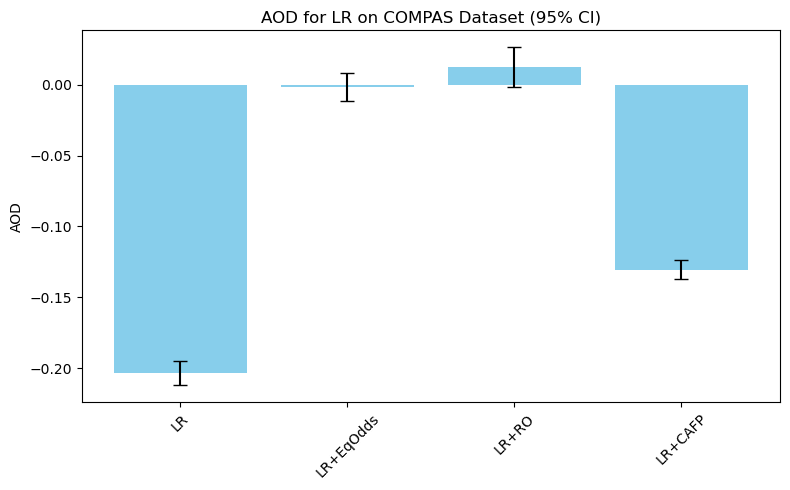}
            \caption{Logistic Regresion}
            \label{fig:LR - adult}
    \end{subfigure}%
      \hfill
    \begin{subfigure}[b]{0.5\textwidth}
            \centering
            \includegraphics[width=\textwidth]{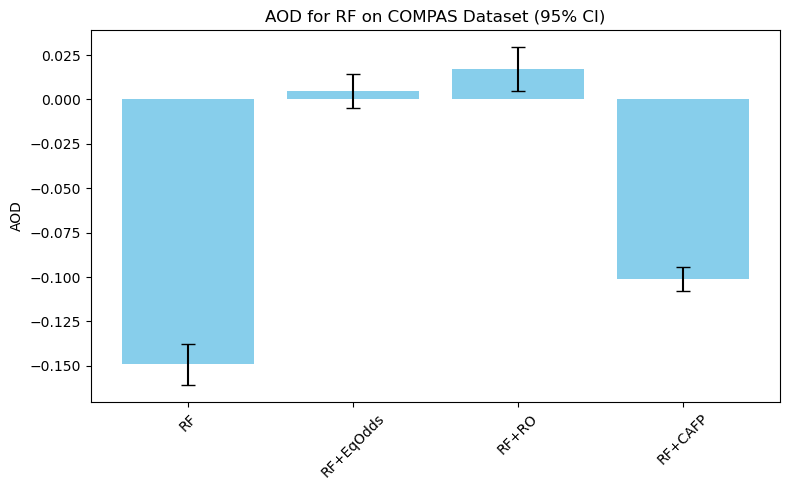}
            \caption{Random Forest}
            \label{fig:RF - adult}
    \end{subfigure}
  \hfill
    \begin{subfigure}[b]{0.5\textwidth}
            \centering
            \includegraphics[width=\textwidth]{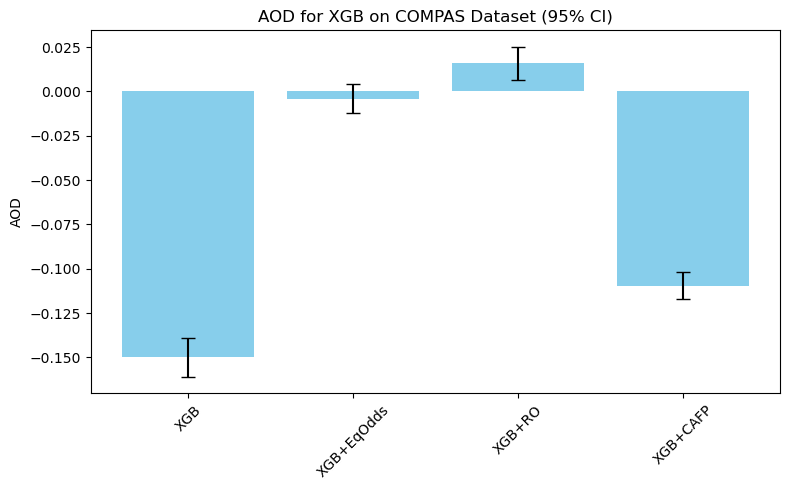}
            \caption{XGBoost}
            \label{fig:XGB - adult}
    \end{subfigure}
    \caption{AOD on COMPAS dataset. Error bars indicate 95\% confidence intervals.}\label{fig:aod_compas}
\end{figure*}

\begin{figure*}[!tbp]
\centering
    \begin{subfigure}[b]{0.5\textwidth}            
            \includegraphics[width=\textwidth]{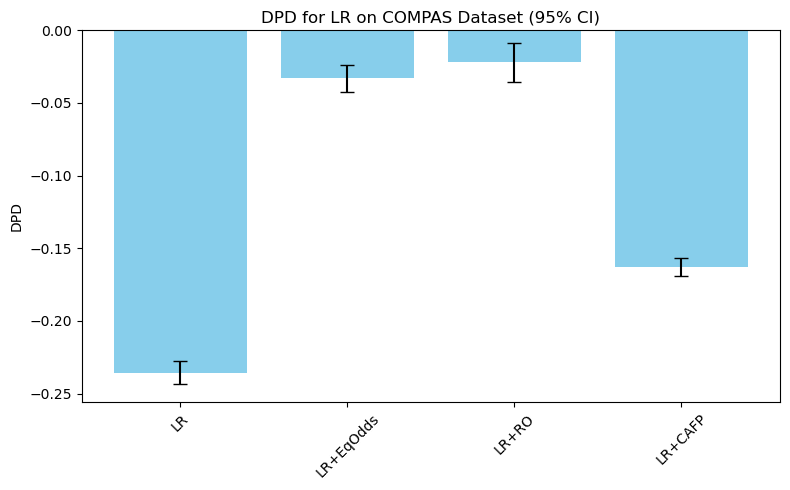}
            \caption{Logistic Regresion}
            \label{fig:LR - adult}
    \end{subfigure}%
      \hfill
    \begin{subfigure}[b]{0.5\textwidth}
            \centering
            \includegraphics[width=\textwidth]{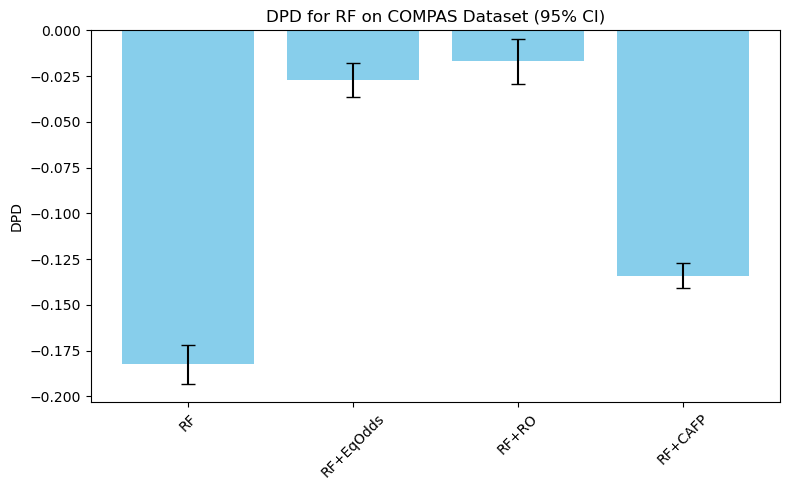}
            \caption{Random Forest}
            \label{fig:RF - adult}
    \end{subfigure}
  \hfill
    \begin{subfigure}[b]{0.5\textwidth}
            \centering
            \includegraphics[width=\textwidth]{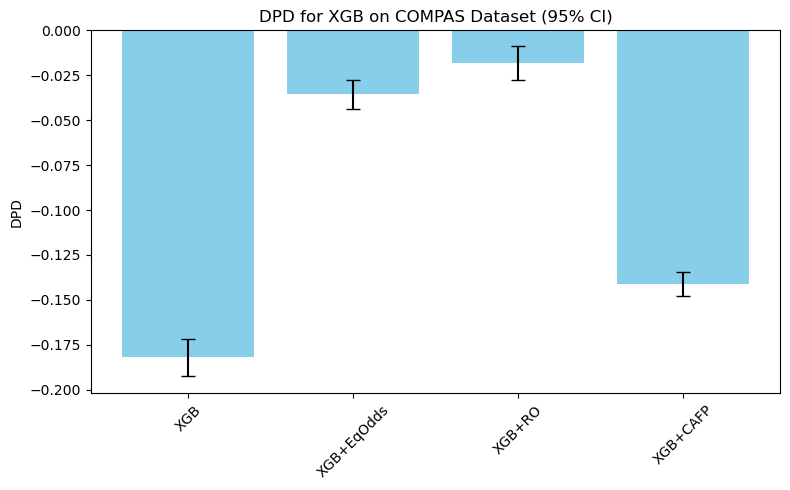}
            \caption{XGBoost}
            \label{fig:XGB - adult}
    \end{subfigure}
    \caption{DPD on COMPAS dataset. Error bars indicate 95\% confidence intervals.}\label{fig:dpd_compas}
\end{figure*}

\subsubsection{German Credit Dataset}

\begin{table}[ht]
\footnotesize
\centering
\caption{Model accuracy on German Credit dataset with 95\% Confidence Intervals (CI) and Standard Deviations (SD)}
\label{tab:german-base-accuracy}
\begin{tabular}{|l|c|c|c|}
\hline
\textbf{Model} & \textbf{Acc.} & \textbf{95\% CI} & \textbf{SD} \\
\hline
LR & 0.7415 & [0.7359, 0.7471] & 0.0284 \\
LR + Eq.Odds & 0.7189 & [0.7111, 0.7266] & 0.0390 \\
LR + R. O. &  0.6842 & [0.6754, 0.6929] & 0.0440 \\
\textbf{LR + CAFP} & \textbf{0.7379 }& \textbf{[0.7321, 0.7438]} & \textbf{0.0293} \\
\hline
RF       & 0.7547 & [0.7495, 0.7599] & 0.0265 \\
RF + Eq.Odds & 0.7390 & [0.7331, 0.7449] & 0.0298 \\
RF + R. O. & 0.7046 & [0.6957, 0.7136] & 0.0453 \\
\textbf{RF + CAFP} & \textbf{0.7541} & \textbf{[0.7489, 0.7593]} & \textbf{0.0263} \\
\hline
XGB  & 0.7482 & [0.7428, 0.7536] & 0.0274 \\
XGB + Eq.Odds & 0.7292 & [0.7217, 0.7367] & 0.0377 \\
XGB + R. O. & 0.6931 & [0.6839, 0.7023] & 0.0464 \\
\textbf{XGB + CAFP} & \textbf{0.7491} &\textbf{ [0.7439, 0.7543]} &\textbf{ 0.0260} \\
\hline
\end{tabular}
\end{table}

Table~\ref{tab:german-base-accuracy} shows the classification accuracy of logistic regression (LR), random forest (RF), and XGBoost (XGB) models on the German Credit dataset, along with their 95\% confidence intervals (CI) and standard deviations (SD). Among the base models, RF achieves the highest average accuracy at 75.5\%, followed closely by XGB (74.8\%) and LR (74.2\%). Applying the Equalized Odds method slightly reduces accuracy for all models, reflecting the cost of enforcing fairness constraints. Reject Option (R.O.) leads to the most substantial drop in accuracy—up to 6 percentage points for XGB—along with increased variability. In contrast, CAFP maintains accuracy nearly identical to the base models (e.g., RF: 0.7547 to 0.7541), with low standard deviation, indicating stable performance. These results suggest that CAFP offers a compelling trade-off, preserving predictive accuracy while supporting fairness interventions.

\begin{table}[ht]
\footnotesize
\centering
\caption{Average Odds Difference on German Credit dataset with 95\% Confidence Intervals (CI) and Standard Deviations (SD)}
\label{tab:german-base-aod}
\begin{tabular}{|l|c|c|c|}
\hline
\textbf{Model} & \textbf{AOD} & \textbf{95\% CI} & \textbf{SD} \\
\hline
LR & -0.0961 & [-0.1142, -0.0779] & 0.0915 \\
LR + Eq.Odds & -0.0139 & [-0.0323, 0.0045] & 0.0929 \\
LR + R. O. & 0.0248 & [0.0037, 0.0460] & 0.1067 \\
\textbf{LR + CAFP} & \textbf{-0.0324} & \textbf{[-0.0480, -0.0169]} & \textbf{0.0786} \\
\hline
RF       & -0.0437 & [-0.0580, -0.0293] & 0.0724 \\
RF + Eq.Odds & -0.0027 & [-0.0210, 0.0156] & 0.0923 \\
RF + R. O. & 0.0462 & [0.0265, 0.0659] & 0.0991 \\
\textbf{RF + CAFP} & \textbf{-0.0216} & \textbf{[-0.0351, -0.0081] }& \textbf{0.0679} \\
\hline
XGB  & -0.0532 & [-0.0702, -0.0361] & 0.0860 \\
XGB + Eq.Odds & -0.0218 & [-0.0396, -0.0040] & 0.0896 \\
XGB + R. O. & 0.0235 & [0.0049, 0.0420] & 0.0937 \\
\textbf{XGB + CAFP} & \textbf{-0.0201} & \textbf{[-0.0345, -0.0056]} &\textbf{ 0.0728} \\
\hline
\end{tabular}
\end{table}

Table~\ref{tab:german-base-aod} reports the Average Odds Difference (AOD) across models trained on the German Credit dataset, including 95\% confidence intervals (CI) and standard deviations (SD). All base models (LR, RF, XGB) exhibit moderate to high negative AOD values, indicating disparities in false positive and false negative rates across groups. Notably, logistic regression (LR) has the largest bias at $-0.0961$, followed by XGB and RF. Applying Equalized Odds consistently reduces AOD across all models, often bringing it close to zero, but at the expense of increased variability. Reject Option (R.O.) inverts the sign of AOD and further increases variance, suggesting overcompensation in fairness correction. By contrast, CAFP meaningfully reduces AOD relative to the base models while preserving stability. For instance, LR + CAFP achieves a lower AOD of $-0.0324$ with a standard deviation of only $0.0786$. These results indicate that CAFP can substantially mitigate fairness disparities while offering a stable and less disruptive alternative to other fairness interventions.

\begin{table}[ht]
\footnotesize
\centering
\caption{Demographic Parity Difference on German Credit dataset with 95\% Confidence Intervals (CI) and Standard Deviations (SD)}
\label{tab:german-base-dpd}
\begin{tabular}{|l|c|c|c|}
\hline
\textbf{Model} & \textbf{DPD} & \textbf{95\% CI} & \textbf{SD} \\
\hline
LR & -0.1068 & [-0.1228, -0.0908] & 0.0806 \\
LR + Eq.Odds & -0.0289 & [-0.0452, -0.0126] & 0.0824 \\
LR + R. O. & 0.0055 & [-0.0159, 0.0269] & 0.1077 \\
\textbf{LR + CAFP} & \textbf{-0.0505} & \textbf{[-0.0647, -0.0364]} & \textbf{0.0713} \\
\hline
RF       & -0.0599 & [-0.0719, -0.0480] & 0.0603 \\
RF + Eq.Odds & -0.0217 & [-0.0364, -0.0070] & 0.0740 \\
RF + R. O. & 0.0106 & [-0.0083, 0.0296] & 0.0955 \\
\textbf{RF + CAFP} & \textbf{-0.0430} &\textbf{ [-0.0544, -0.0316]} & \textbf{0.0577} \\
\hline
XGB  & -0.0704 & [-0.0843, -0.0565] & 0.0699 \\
XGB + Eq.Odds & -0.0345 & [-0.0494, -0.0196] & 0.0751 \\
XGB + R. O. & -0.0015 & [-0.0202, 0.0171] & 0.0938 \\
\textbf{XGB + CAFP} & \textbf{-0.0407} &\textbf{ [-0.0528, -0.0286] }& \textbf{0.0611} \\
\hline
\end{tabular}
\end{table}

Table~\ref{tab:german-base-dpd} presents the Demographic Parity Difference (DPD) for a range of models on the German Credit dataset, along with corresponding 95\% confidence intervals (CI) and standard deviations (SD). All base models show moderate levels of demographic disparity, with logistic regression (LR) yielding the highest bias at $-0.1068$, followed by XGB ($-0.0704$) and RF ($-0.0599$). Application of Equalized Odds consistently lowers DPD values for all models, though with slight increases in standard deviation. Reject Option (R.O.) often shifts DPD close to zero or even positive, but this comes with increased variability and confidence intervals that span zero, reflecting potential instability. CAFP, by contrast, reliably reduces DPD compared to the baseline for each model while maintaining lower variance than R.O. For example, LR + CAFP achieves a DPD of $-0.0505$ with a standard deviation of $0.0713$, striking a balance between fairness and consistency. These results confirm that CAFP improves demographic parity more stably than other fairness interventions, without the volatility seen in R.O.

Figures \ref{fig:acc_compas}, \ref{fig:aod_compas}, and \ref{fig:dpd_compas} show the mean accuracy, mean AOD, and mean DPD, and their 95\% CI for the three models in the German Credit dataset. 

\begin{figure*}[!tbp]
\centering
    \begin{subfigure}[b]{0.5\textwidth}            
            \includegraphics[width=\textwidth]{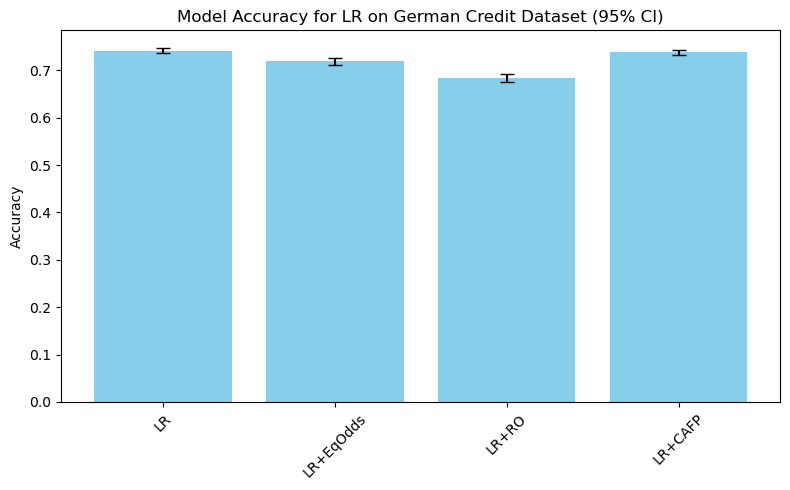}
            \caption{Logistic Regresion}
            \label{fig:LR - adult}
    \end{subfigure}%
      \hfill
    \begin{subfigure}[b]{0.5\textwidth}
            \centering
            \includegraphics[width=\textwidth]{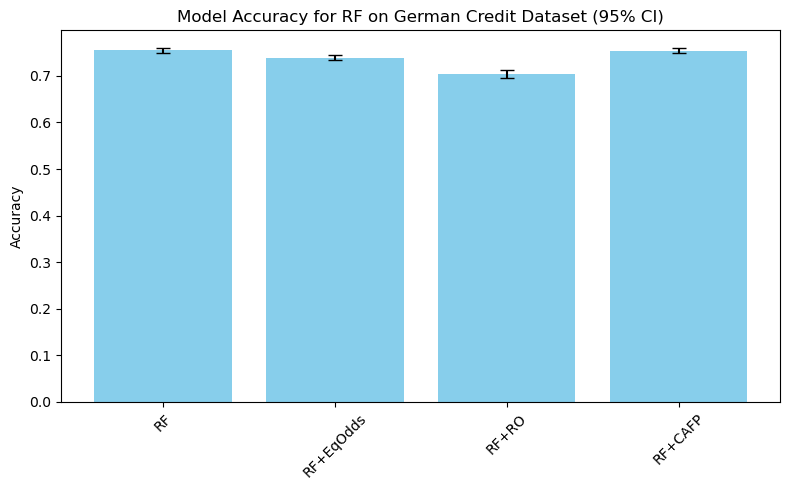}
            \caption{Random Forest}
            \label{fig:RF - adult}
    \end{subfigure}
  \hfill
    \begin{subfigure}[b]{0.5\textwidth}
            \centering
            \includegraphics[width=\textwidth]{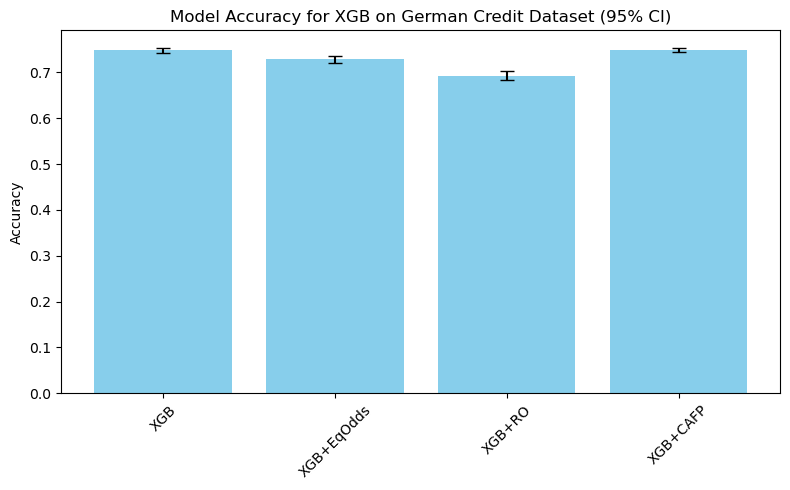}
            \caption{XGBoost}
            \label{fig:XGB - adult}
    \end{subfigure}
    \caption{Accuracy on German Credit dataset. Error bars indicate 95\% confidence intervals.}\label{fig:acc_compas}
\end{figure*}

\begin{figure*}[!tbp]
\centering
    \begin{subfigure}[b]{0.5\textwidth}            
            \includegraphics[width=\textwidth]{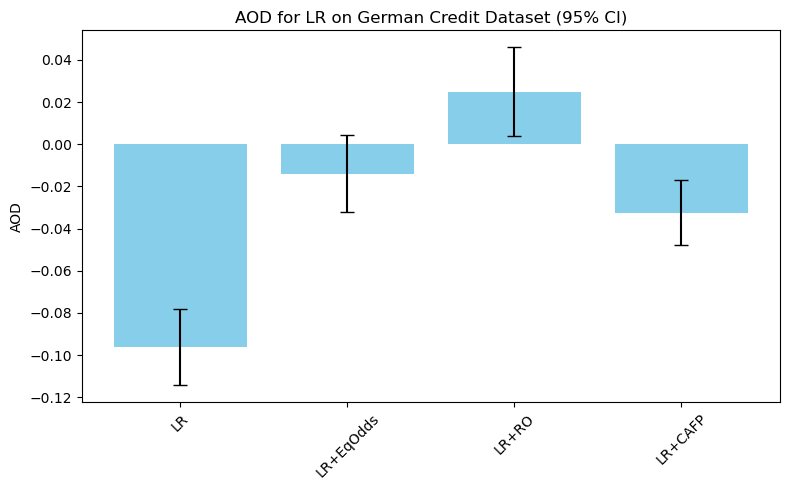}
            \caption{Logistic Regresion}
            \label{fig:LR - adult}
    \end{subfigure}%
      \hfill
    \begin{subfigure}[b]{0.5\textwidth}
            \centering
            \includegraphics[width=\textwidth]{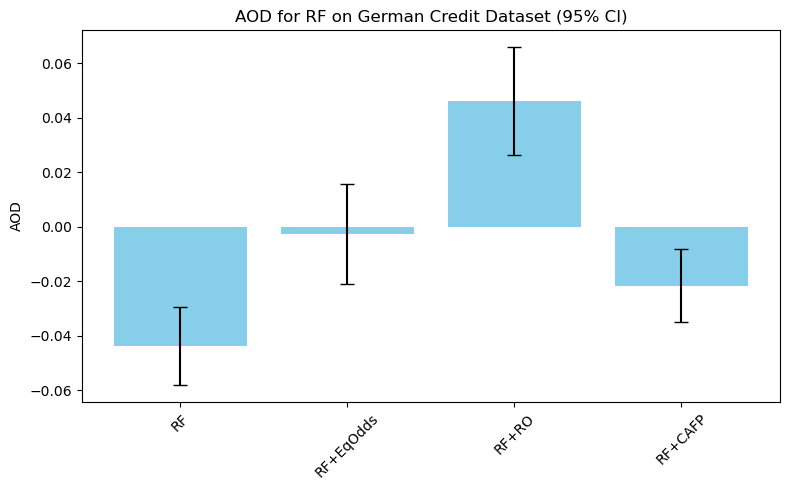}
            \caption{Random Forest}
            \label{fig:RF - adult}
    \end{subfigure}
  \hfill
    \begin{subfigure}[b]{0.5\textwidth}
            \centering
            \includegraphics[width=\textwidth]{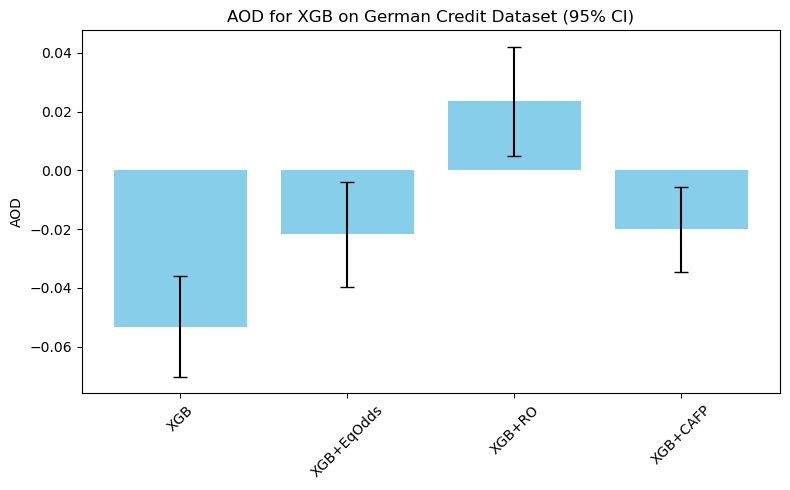}
            \caption{XGBoost}
            \label{fig:XGB - adult}
    \end{subfigure}
    \caption{AOD on German Credit dataset. Error bars indicate 95\% confidence intervals.}\label{fig:aod_german}
\end{figure*}

\begin{figure*}[!tbp]
\centering
    \begin{subfigure}[b]{0.5\textwidth}            
            \includegraphics[width=\textwidth]{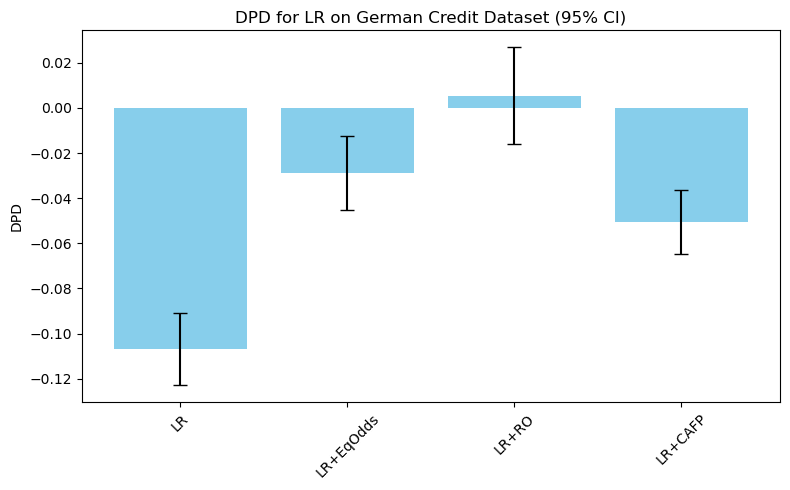}
            \caption{Logistic Regresion}
            \label{fig:LR - adult}
    \end{subfigure}%
      \hfill
    \begin{subfigure}[b]{0.5\textwidth}
            \centering
            \includegraphics[width=\textwidth]{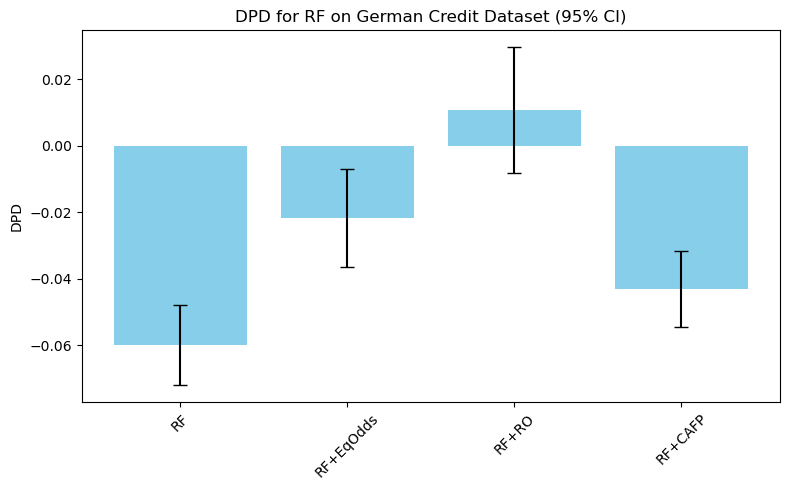}
            \caption{Random Forest}
            \label{fig:RF - adult}
    \end{subfigure}
  \hfill
    \begin{subfigure}[b]{0.5\textwidth}
            \centering
            \includegraphics[width=\textwidth]{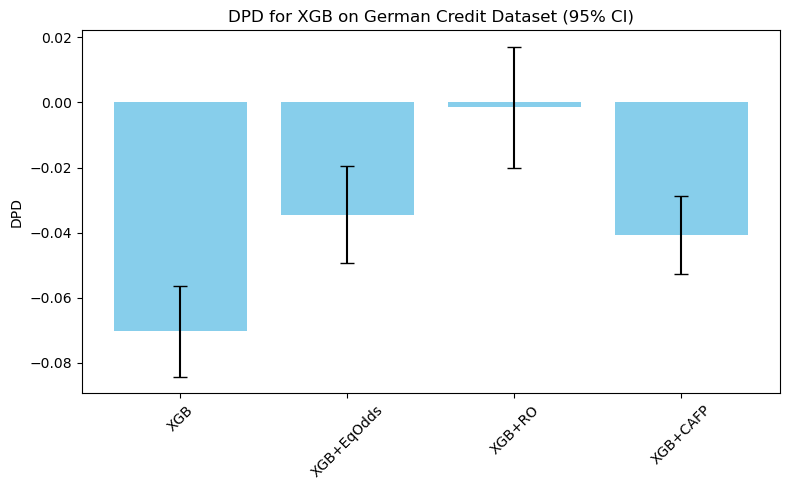}
            \caption{XGBoost}
            \label{fig:XGB - adult}
    \end{subfigure}
    \caption{DPD on German Credit dataset. Error bars indicate 95\% confidence intervals.}\label{fig:dpd_german}
\end{figure*}

\subsection{Threshold Sensitivity Analysis}

To assess the robustness of the proposed CAFP method, we conduct a threshold sensitivity analysis, examining how fairness and utility metrics behave under varying classification thresholds. While CAFP modifies the probabilistic output of the base classifier through counterfactual averaging, real-world deployment often requires thresholding these scores to produce binary decisions. Understanding how sensitive fairness outcomes are to this decision boundary is crucial for practical adoption.

To perform this experiment we vary the decision threshold \( t \in [0.01, 0.99] \) across 25 points and evaluate classification performance and fairness metrics at each setting. Specifically, we track accuracy (blue color) and demographic parity difference (DPD) (red color) across thresholds for the same three datasets and three models as before. We compare the original model before post-processing (solid line), the model after post-processing with Calibrated Equalized Odds (dotted line), and CAFP (dashed line). 

\begin{figure*}[!tbp]
\centering
    \begin{subfigure}[b]{0.5\textwidth}            
            \includegraphics[width=0.9\textwidth]{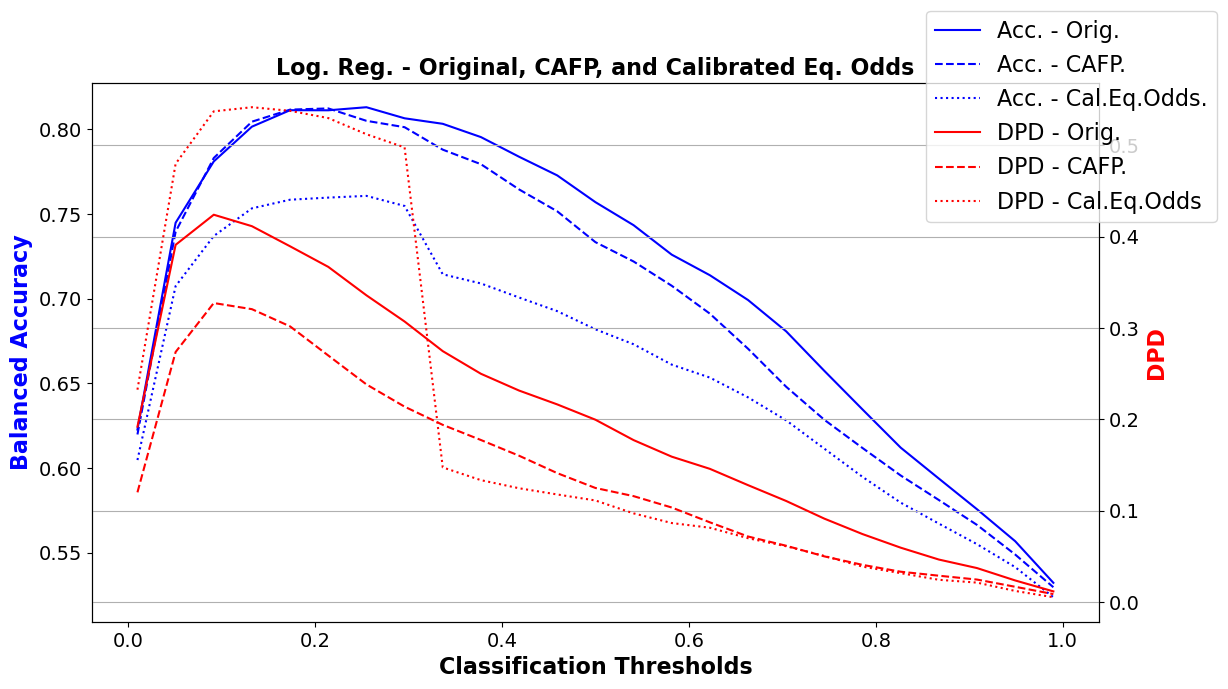}
            \caption{Logistic Regresion}
            \label{fig:LR - adult}
    \end{subfigure}%
      \hfill
    \begin{subfigure}[b]{0.5\textwidth}
            \centering
            \includegraphics[width=0.9\textwidth]{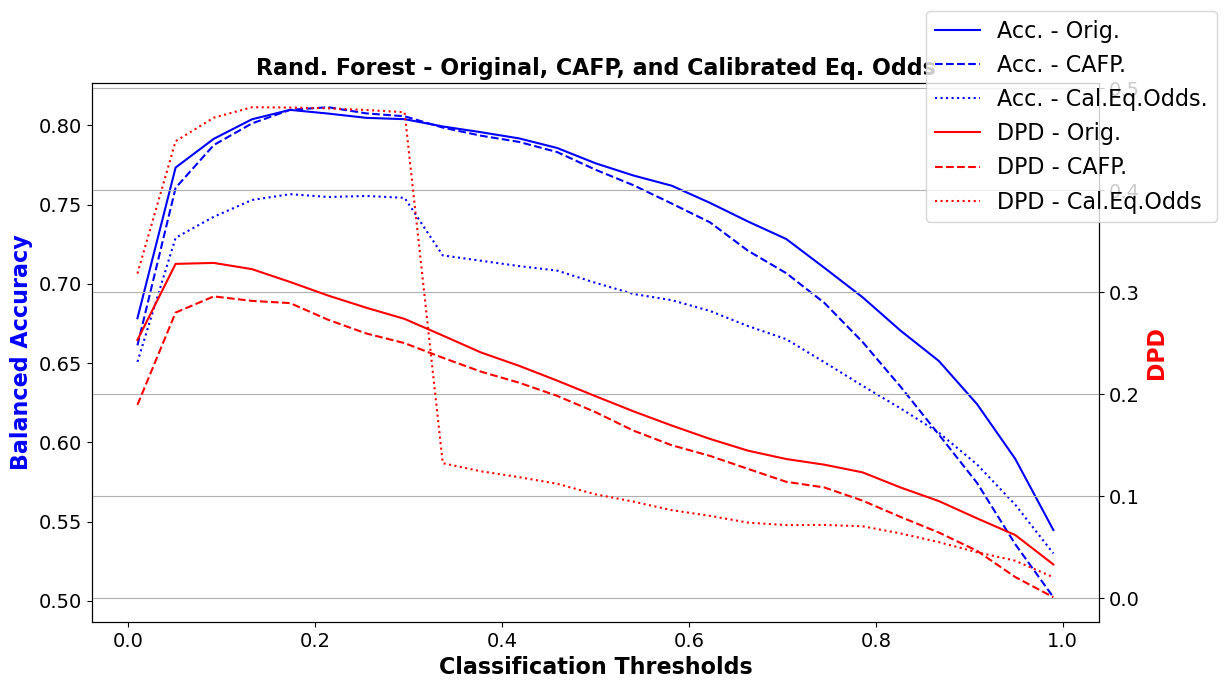}
            \caption{Random Forest}
            \label{fig:RF - adult}
    \end{subfigure}
  \hfill
    \begin{subfigure}[b]{0.5\textwidth}
            \centering
            \includegraphics[width=0.9\textwidth]{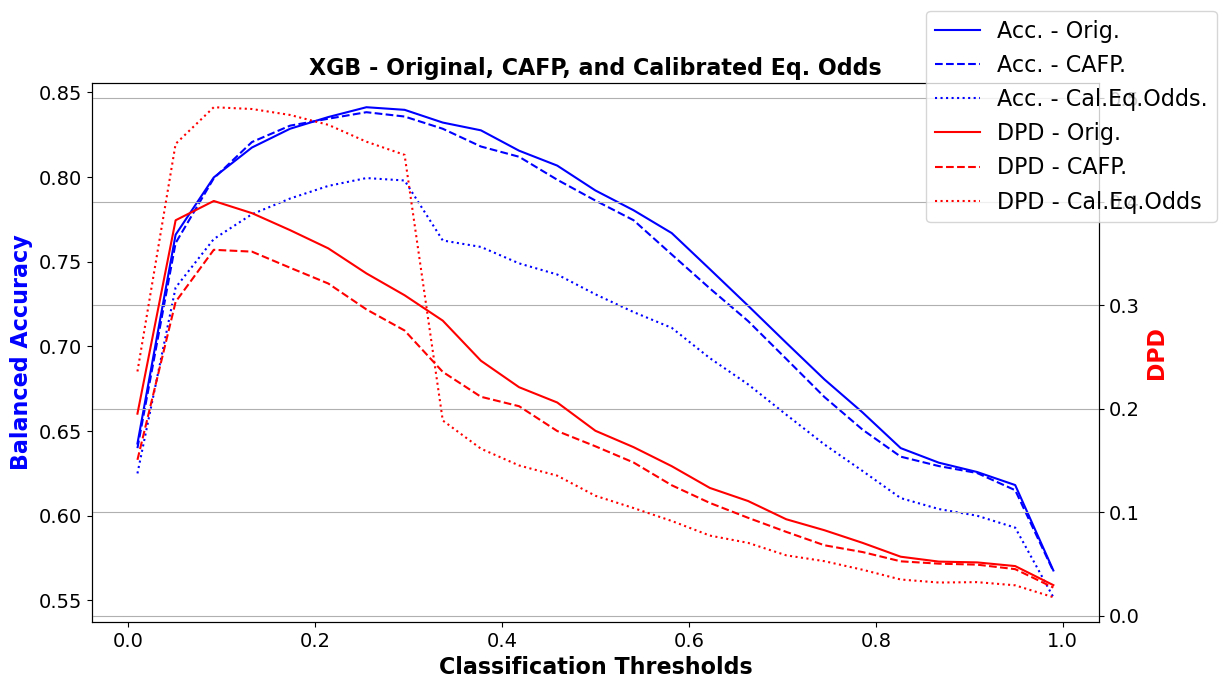}
            \caption{XGBoost}
            \label{fig:XGB - adult}
    \end{subfigure}
    \caption{Balanced Accuracy vs. DPD across thresholds - Adult dataset}\label{fig:TOF_adult}
\end{figure*}

\begin{figure*}[!tbp]
\centering
    \begin{subfigure}[b]{0.5\textwidth}            
            \includegraphics[width=0.9\textwidth]{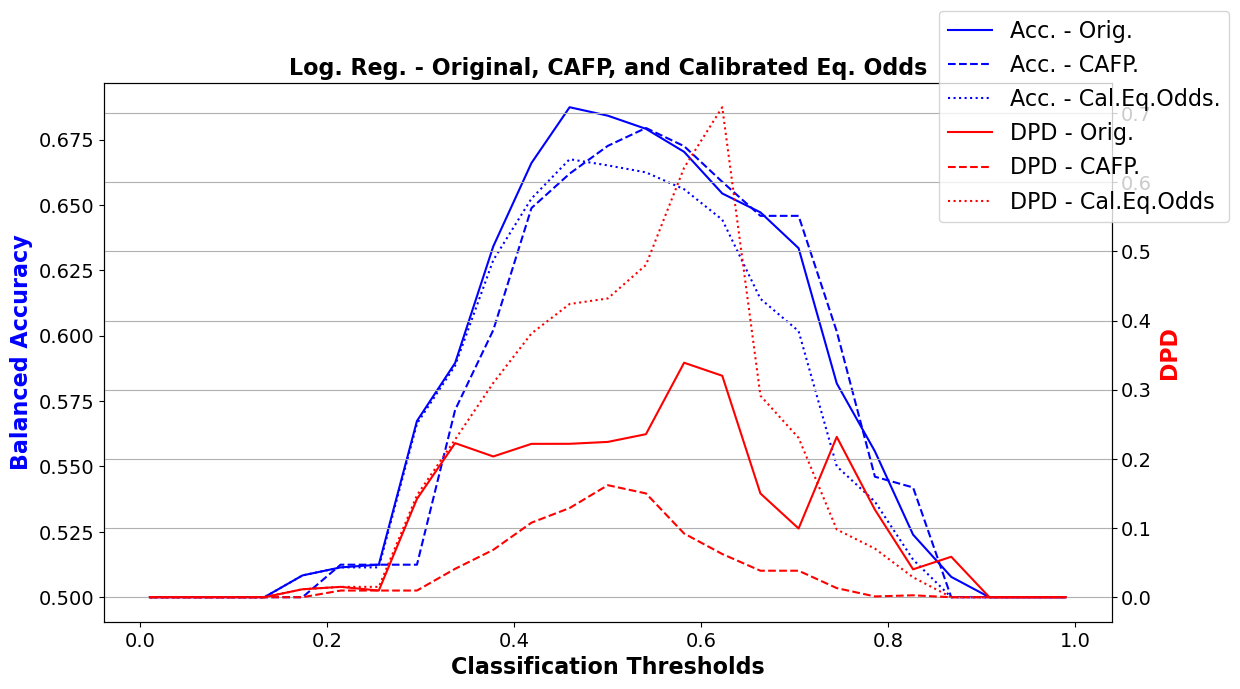}
            \caption{Logistic Regresion}
            \label{fig:LR - compas}
    \end{subfigure}%
      \hfill
    \begin{subfigure}[b]{0.5\textwidth}
            \centering
            \includegraphics[width=0.9\textwidth]{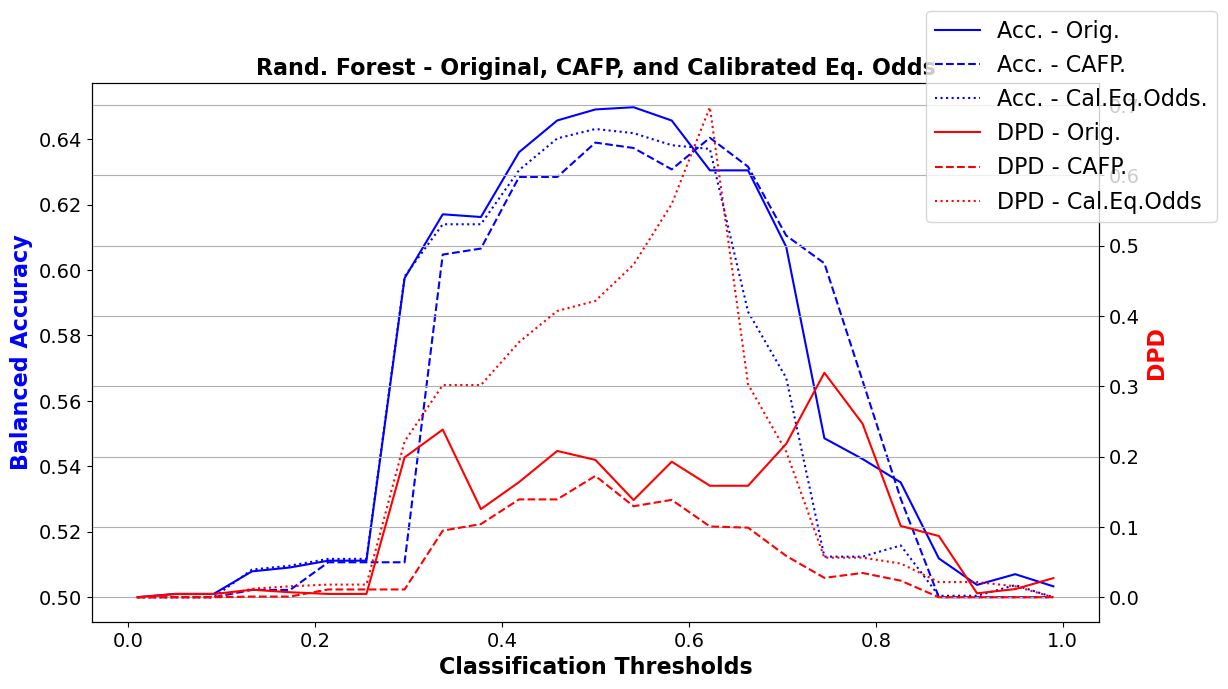}
            \caption{Random Forest}
            \label{fig:RF - compas}
    \end{subfigure}
  \hfill
    \begin{subfigure}[b]{0.5\textwidth}
            \centering
            \includegraphics[width=0.9\textwidth]{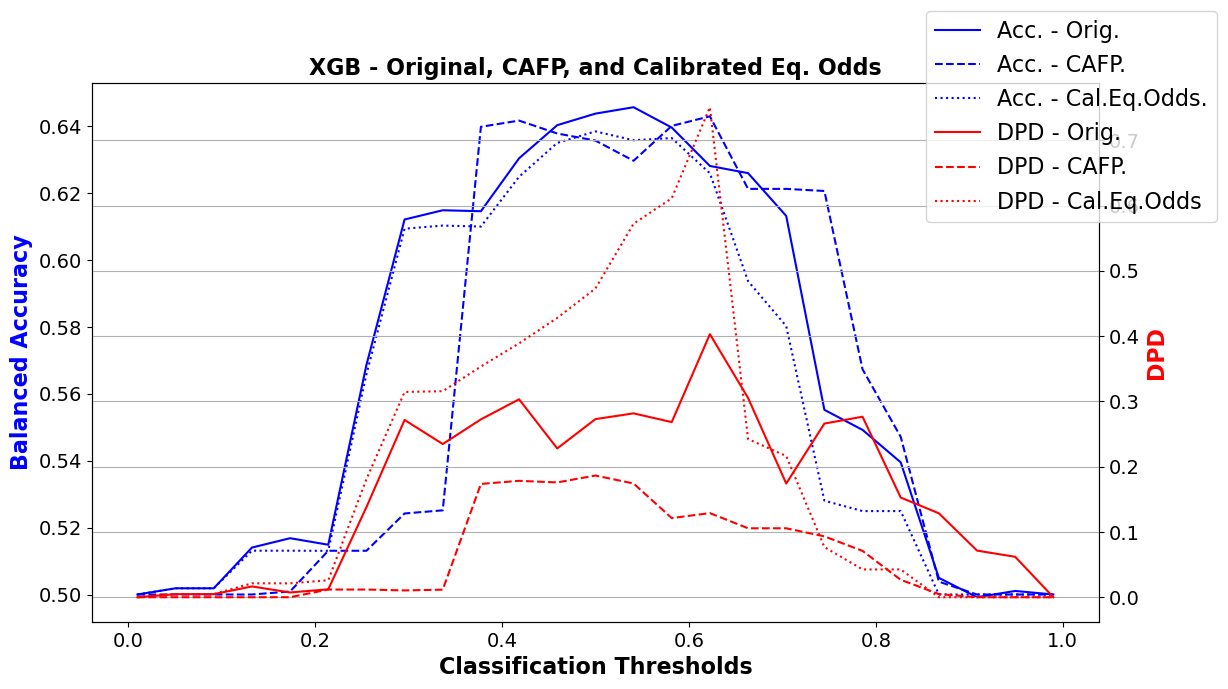}
            \caption{XGBoost}
            \label{fig:XGB - compas}
    \end{subfigure}
    \caption{Balanced Accuracy vs. DPD across thresholds - COMPAS dataset}\label{fig:TOF_compas}
\end{figure*}

\begin{figure*}[!tbp]
\centering
    \begin{subfigure}[b]{0.5\textwidth}            
            \includegraphics[width=0.9\textwidth]{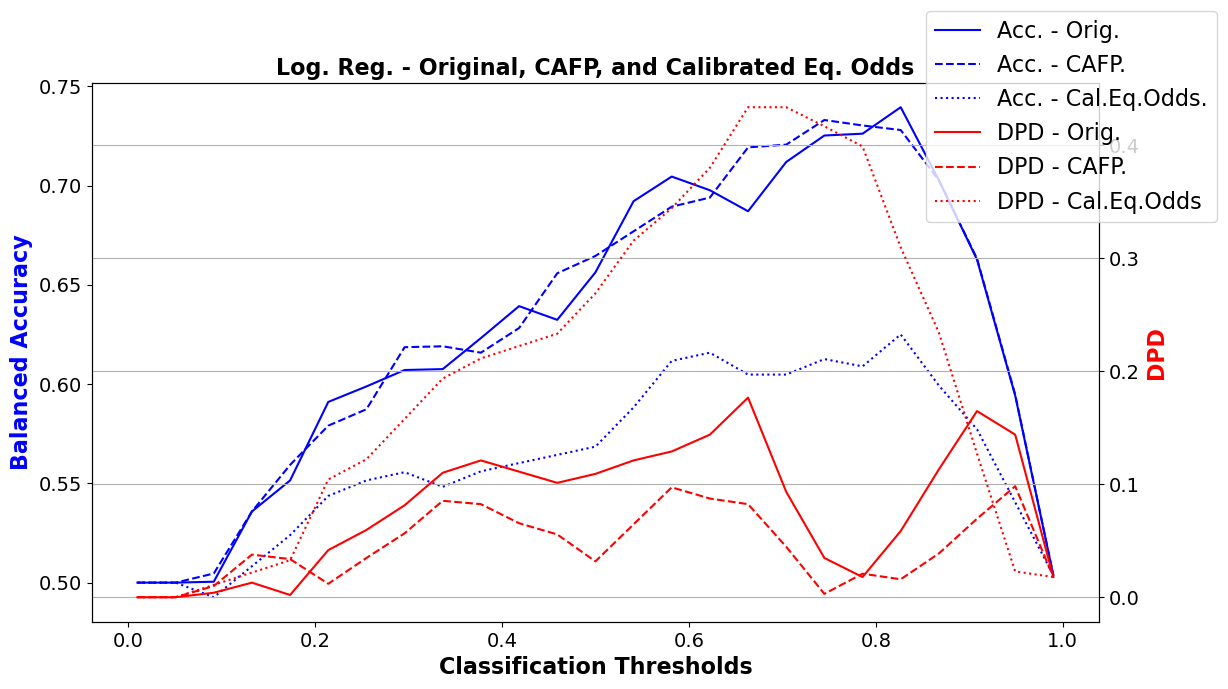}
            \caption{Logistic Regresion}
            \label{fig:LR - german}
    \end{subfigure}%
      \hfill
    \begin{subfigure}[b]{0.5\textwidth}
            \centering
            \includegraphics[width=0.9\textwidth]{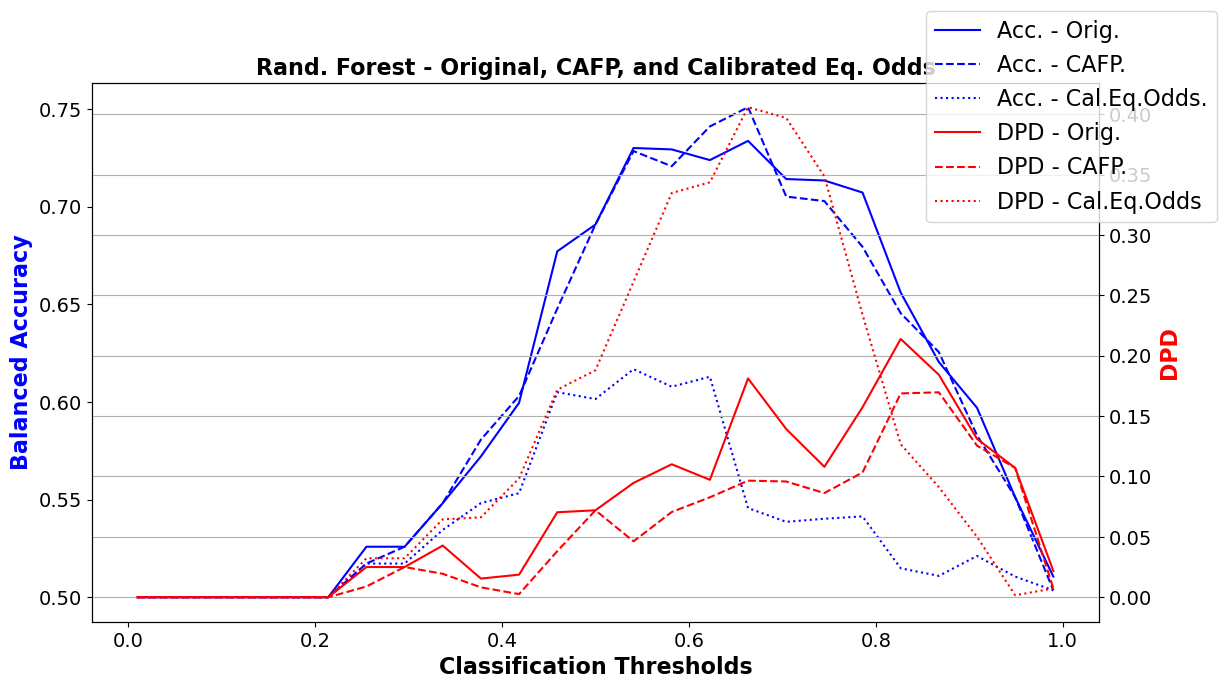}
            \caption{Random Forest}
            \label{fig:RF - german}
    \end{subfigure}
  \hfill
    \begin{subfigure}[b]{0.5\textwidth}
            \centering
            \includegraphics[width=0.9\textwidth]{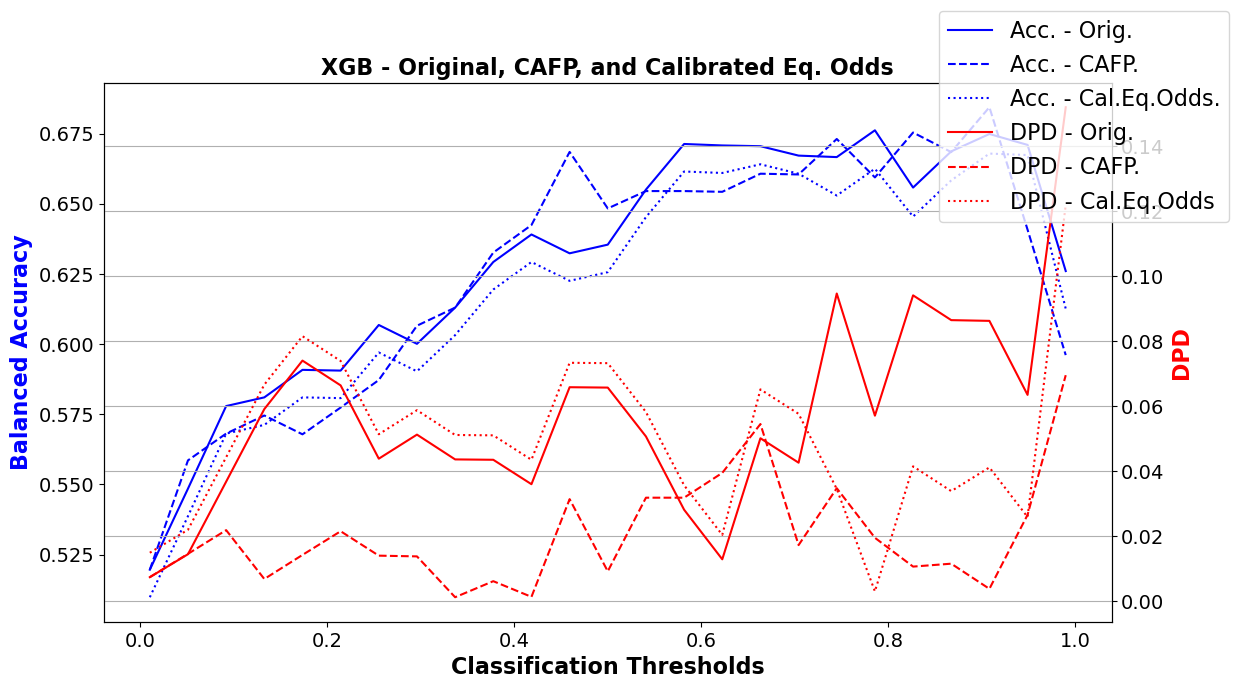}
            \caption{XGBoost}
            \label{fig:XGB - german}
    \end{subfigure}
    \caption{Balanced Accuracy vs. DPD across thresholds - German Credit dataset}\label{fig:TOF_german}
\end{figure*}

Figures~\ref{fig:TOF_adult}, \ref{fig:TOF_compas}, and \ref{fig:TOF_german} summarize the threshold-dependent trade-offs between balanced accuracy and demographic parity difference (DPD) across three datasets—Adult Income, COMPAS, and German Credit—and three classifiers: logistic regression, random forest, and XGBoost. For each configuration, we compare the original model (solid lines), calibrated equalized odds (dotted lines), and our proposed CAFP method (dashed lines). Across nearly all settings, CAFP maintains a favorable balance: it preserves high balanced accuracy comparable to the base model, while substantially reducing group disparity across thresholds. Calibrated Equalized Odds achieves lower DPD at some thresholds but often suffers from reduced accuracy and less stability. Notably, in the Adult dataset (Figures~\ref{fig:LR - adult}, \ref{fig:RF - adult}, and \ref{fig:XGB - adult}), CAFP curves closely track the original model’s accuracy while lowering DPD across the entire threshold range. For the COMPAS dataset (Figures~\ref{fig:LR - compas}, \ref{fig:RF - compas}, \ref{fig:XGB - compas}), where the original models exhibit high disparity, CAFP consistently mitigates unfairness with minimal utility loss. In the German Credit setting (Figures~\ref{fig:LR - german}, \ref{fig:RF - german}, \ref{fig:XGB - german}), where base models already show moderate fairness, CAFP performs conservatively—preserving fairness gains without degrading performance. Overall, these results confirm that CAFP offers smooth and stable fairness–accuracy behavior across threshold choices, architectures, and datasets, making it a robust post-processing tool suitable for real-world deployment scenarios where decision thresholds may vary or require tuning.

\subsection{Ablation Study: Factual, Counterfactual, and Averaged Predictions}

To isolate the contribution of counterfactual averaging in CAFP, we conducted an ablation study comparing three variants of prediction strategies:
\begin{itemize}
    \item \textbf{Factual only:} \( f(x, a) \), the model’s output using the true protected attribute.
    \item \textbf{Counterfactual only:} \( f(x, 1 - a) \), the model’s output with the protected attribute flipped.
    \item \textbf{CAFP:} \( \hat{f}(x) = \frac{1}{2}(f(x, a) + f(x, 1 - a)) \), the counterfactual average.
\end{itemize}

The table below shows test set results on the Adult dataset using logistic regression. While the factual and counterfactual models yield comparable accuracy and fairness disparities, the averaged CAFP method significantly reduces both Demographic Parity Difference (DPD) and Average Odds Difference (AOD) without sacrificing performance.

\begin{table}[ht]
\centering
\caption{Ablation study comparing factual-only, counterfactual-only, and averaged predictions (Adult dataset, logistic regression).}
\label{tab:ablation}
\begin{tabular}{lccc}
\toprule
\textbf{Prediction Type} & \textbf{Accuracy} & \textbf{DPD} & \textbf{AOD} \\
\midrule
Factual \( f(x, a) \) & 0.8464 & -0.1868 & -0.0961 \\
Counterfactual & & \\
\( f(x, 1 - a) \) & 0.8463 & -0.1775 & -0.0912 \\
CAFP \( \hat{f}(x) \) & 0.8432 & \textbf{-0.1157} & \textbf{0.0075} \\
\bottomrule
\end{tabular}
\end{table}

These results confirm that the averaging mechanism in CAFP plays a crucial role in reducing group-level disparities. The method achieves this without introducing significant predictive degradation, reinforcing its suitability as a lightweight fairness-enhancing post-processing technique.

Similar patterns were observed with other datasets and models.

\subsection{Summary}

The experiments validate that CAFP is an effective post-processing strategy for reducing group disparities in binary classification. The method achieves consistent improvements in fairness metrics with minimal degradation in accuracy, and requires no retraining or access to model internals. We next discuss the practical implications and limitations of the approach.

\section{Discussion}

The results presented in Section~6 demonstrate that Counterfactual Averaging for Fair Predictions (CAFP) is a simple yet powerful post-processing technique for improving group fairness in binary classification tasks. In this section, we discuss the implications of our findings, contrast CAFP with existing approaches, and outline its practical considerations and limitations.

\subsection{Fairness-Utility Trade-off}

The experimental results across all datasets and model types demonstrate that Counterfactual Averaging for Fair Predictions (CAFP) consistently improves fairness metrics—specifically, demographic parity difference (DPD), and average odds difference (AOD)—while preserving predictive utility. In most configurations, CAFP achieves parity reductions comparable to or better than state-of-the-art post-processing methods, such as Equalized Odds and Reject Option Classification, but with significantly smaller degradation in accuracy.

The fairness–accuracy trade-off curves across thresholds further support this finding. CAFP consistently reduces group disparities across a wide threshold range, often tracking the baseline model’s accuracy curve while lowering disparity more effectively than calibrated equalized odds. In datasets with strong baseline bias (e.g., COMPAS), CAFP mitigates disparities with only marginal loss in accuracy, and in more balanced settings (e.g., German Credit), it preserves fairness without unnecessary distortion. These results align with the theoretical guarantees presented in Section~\ref{sec:theory} and confirm that CAFP offers a robust, model-agnostic trade-off between group fairness and predictive performance.

%Across all evaluated datasets, CAFP significantly reduces group disparities—measured by demographic parity difference (DPD), equalized odds difference (EOD), and counterfactual bias—while maintaining overall accuracy within a fraction of a percentage point. The fairness–accuracy trade-off curves confirm that CAFP offers favorable operating points where fairness improves substantially with negligible impact on performance. This supports the theoretical bounds on prediction distortion and highlights the utility of counterfactual averaging as a fairness-preserving mechanism.

\subsection{Comparison with Existing Methods}

Unlike many in-processing or threshold-based post-processing techniques, CAFP is model-agnostic, requires no retraining, and does not involve complex optimization procedures. Its primary advantage over group-specific thresholding methods is that it does not require the protected attribute to be known or observed at test time, as predictions for both counterfactual group values are computed and averaged internally. Compared to fairness methods based on adversarial learning or causal graphs, CAFP is more lightweight and practical for deployment in real-world systems where full model transparency or causal structure is unavailable.

\subsection{Generalizability and Applications}

CAFP is particularly well-suited for systems that expose a probabilistic API (e.g., scoring services in hiring or lending platforms), or where post-deployment fairness audits are necessary. Because it operates independently of the training pipeline, it can be added to legacy models or third-party black-box systems. Furthermore, since CAFP treats the protected attribute symmetrically, it can be applied in contexts where the definition of "advantaged" vs. "disadvantaged" group is not explicitly encoded in the model.

As a real-world example of use, consider a fintech company that uses a proprietary, third-party credit scoring model to predict loan default risk. The model is deployed as a black-box API and includes protected attributes such as gender or age among its inputs. Due to regulatory obligations (e.g., the Equal Credit Opportunity Act), the company must ensure that its lending decisions are not unfairly biased by these sensitive attributes. However, since the model is externally maintained, the firm cannot retrain or alter its internal parameters.

To address this, the company applies Counterfactual Averaging for Fair Predictions (CAFP) as a post-processing fairness layer. For each applicant, the firm queries the model twice—once using the actual value of the protected attribute (e.g., gender = female) and once with the counterfactual (e.g., gender = male)—and averages the two resulting probability scores. The final score is used to make the loan approval decision. This simple wrapper allows the company to reduce disparate treatment across demographic groups, without modifying the original model or relying on protected attribute information at test time. As a result, CAFP enables compliance with fairness regulations while preserving predictive accuracy and operational simplicity, illustrating its value in real-world financial decision-making contexts.

\subsection{Limitations}

Despite its strengths, CAFP has certain limitations. First, it assumes the protected attribute is binary; extending it to multi-class or intersectional identities (e.g., gender × race) is non-trivial and left for future work. Second, while CAFP reduces the influence of the protected attribute, it does not eliminate all indirect effects—especially when feature distributions differ substantially across groups. Additionally, the method assumes the ability to simulate predictions for alternate group memberships, which may not always be possible if the protected attribute cannot be synthetically perturbed.

\subsection{Toward Broader Fairness Guarantees}

Although our theoretical analysis focuses on demographic parity and equalized odds, CAFP could be integrated into broader fairness frameworks. For example, it may be combined with calibration or ranking constraints to ensure multi-objective fairness compliance. It also opens the door to designing inference-time fairness wrappers that can plug into diverse pipelines without disrupting the underlying learning system.

\subsection{Ethical and Societal Implications}

As machine learning systems increasingly inform decisions in sensitive domains such as credit lending, hiring, and criminal justice, ensuring equitable treatment across demographic groups becomes a pressing ethical obligation. The proposed CAFP method contributes to this objective by offering a practical, model-agnostic mechanism for reducing algorithmic bias post hoc, without requiring retraining or access to internal model parameters.

Importantly, CAFP addresses key deployment constraints: it does not rely on access to sensitive attributes at test time, avoids group-specific thresholds that may raise legal concerns, and minimizes prediction distortion, thereby preserving individual-level consistency. These properties make CAFP especially suitable for real-world applications in privacy-sensitive or regulated settings, where fairness interventions must be both auditable and minimally intrusive.

However, we recognize that technical solutions alone cannot resolve the full spectrum of ethical challenges in AI systems. CAFP mitigates direct influence of protected attributes but does not eliminate structural or historical biases embedded in training data or societal institutions. Moreover, while counterfactual averaging reduces group disparities, it may obscure individual-level harms or mask intersectional inequities if subgroup differences are not explicitly modeled.

We encourage practitioners to view CAFP as part of a broader fairness toolkit, to be deployed alongside transparency audits, stakeholder consultation, and ongoing monitoring. Future extensions should explore accountability in multi-attribute or dynamic settings, and assess how users interpret and trust predictions under counterfactual adjustment. Ultimately, the ethical deployment of fair machine learning systems requires aligning technical rigor with societal values, legal frameworks, and inclusive governance.

\section{Conclusion and Future Work}

This paper introduced Counterfactual Averaging for Fair Predictions (CAFP), a model-agnostic post-processing method for mitigating unfair dependence on protected attributes in binary classification. Inspired by the principle of counterfactual fairness, CAFP operates by averaging a model's predictions over factual and counterfactual versions of the protected attribute. The method is simple to implement, computationally efficient, and applicable to any black-box classifier that accepts group attributes as input.

We presented a formal theoretical analysis demonstrating that CAFP removes direct dependence on the protected attribute under mild assumptions, introduces provably bounded prediction distortion, and improves common group fairness metrics such as demographic parity and equalized odds. Our empirical evaluation across three widely studied datasets—Adult Income, COMPAS, and German Credit—confirmed that CAFP significantly reduces group-level disparities while preserving predictive accuracy. These results validate CAFP as a practical and principled fairness mechanism for real-world deployment.

Looking ahead, several directions remain open for future work. First, we aim to extend CAFP to handle non-binary or multi-valued protected attributes, including intersectional subgroups (e.g., race × gender). Second, we plan to explore probabilistic and distributional variants of counterfactual averaging that explicitly account for uncertainty in group assignments or latent causal pathways. Finally, we are interested in integrating CAFP into fairness-aware calibration and ranking frameworks, and in evaluating its impact on downstream tasks such as allocation and recommendation.

Overall, CAFP provides a promising building block for fairness-aware machine learning, particularly in settings where model internals are inaccessible and post hoc interventions are needed.

\bibliography{sn-bibliography}% common bib file
%% if required, the content of .bbl file can be included here once bbl is generated
%%\input sn-article.bbl

\end{document}